%
%
%
%
%
\RequirePackage{fix-cm}
\documentclass[smallextended]{svjour3}       
\smartqed  

\usepackage{amsmath}
\usepackage{graphicx}
\usepackage[utf8]{inputenc}
\usepackage{amsfonts}
\usepackage{comment}
\usepackage{footnote}
\usepackage{appendix}
\usepackage{natbib}
\usepackage{amsthm}
\usepackage{amssymb}

\usepackage{graphicx}
\usepackage{caption}
\usepackage{subfigure}
\usepackage{makecell}
\usepackage{adjustbox}
\usepackage{float}
\usepackage{multirow}
\usepackage{bbding}
\usepackage{pifont}
\usepackage{wasysym}

\usepackage{hyperref}

\usepackage[ruled,linesnumbered]{algorithm2e}
\newtheorem{prop}{Proposition}

\usepackage{mathrsfs} 
%
%
%
%
%
\begin{document}

\title{A Robust Multi-Objective Bayesian Optimization Framework Considering Input Uncertainty
}


\author{Jixiang Qing         \and
        Ivo Couckuyt \and
        Tom Dhaene 
}


\institute{J. Qing \at
              Ghent University -- imec, IDLab, Department of Information Technology (INTEC), Tech Lane -- Zwijnaarde 126, 9052 Ghent, Belgium \\
              \email{Jixiang.Qing@UGent.be}           
}
\date{Received: date / Accepted: date}

\maketitle


\section*{ABSTRACT}
Bayesian optimization is a popular tool for data-efficient optimization of expensive objective functions. In real-life applications like engineering design, the designer often wants to take multiple objectives as well as input uncertainty into account to find a set of robust solutions. While this is an active topic in single-objective Bayesian optimization, it is less investigated in the multi-objective case. We introduce a novel Bayesian optimization framework to efficiently perform multi-objective optimization considering input uncertainty. We propose a robust Gaussian Process model to infer the Bayes risk criterion to quantify robustness, and we develop a two-stage Bayesian optimization process to search for a robust Pareto frontier. The complete framework supports various distributions of the input uncertainty and takes full advantage of parallel computing. We demonstrate the effectiveness of the framework through numerical benchmarks.

\section{Introduction}
In many real-life applications, we are faced with multiple conflicting goals. For instance, tuning the topology of neural networks for accuracy as well as inference time \citep{fernandez2020improved}. A solution that is optimal for all objectives usually does not exist, and one has to compromise: identify a set of solutions that provides a trade-off among different objectives. Moreover, the calculation of the objectives sometimes requires a significant computational effort. Hence, a Multi-Objective Optimization (MOO) strategy, which is able to quickly and efficiently locate all the optimal trade-offs, is of practical interest.

Multi-Objective Bayesian Optimization (MOBO) (e.g., \citep{daulton2020differentiable,yang2019efficient}) is a well-established efficient global optimization technique to search for an optimal trade-off between conflicting objectives. Its useful properties, including data-efficiency and an agnostic treatment of the objective function, have made MOBO a widely applicable optimization technique, especially where the objectives are  time-consuming to evaluate.

In a chaotic world full of uncertainties, it is almost impossible to implement an optimal solution exactly as defined. For instance, consider an optimal configuration of a system found by MOBO. Any manufacturing uncertainty could result in a slightly different configuration and hence result in a possible degradation of the actual performance. Among these uncertainties, we are specifically interested in considering \textbf{input uncertainty}: a common uncertainty type caused by perturbations of the input parameters, that might result in different outputs. Considering input uncertainty in MOO is important to ensure that the final implemented optimal solutions are still likely to be satisfactory. Hence, it is also of high interest in MOBO.

\textbf{Limitation of current approaches}
Data-efficient approaches have been proposed to perform MOBO considering input uncertainty (\citet{zhou2018multi, rivier2018surrogate}). These approaches extend existing robust MOO methodologies with a computationally efficient surrogate model, however, the surrogate model is only utilized in a non-Bayesian way, i.e, the posterior mean is used as a point estimation, and the model refinement step has to be defined explicitly. This has usually resulted a complicated robust MOO framework. Motivated by these, we propose a lightweight robust MOO framework that deals with robustness in a principle way and still enjoys the elegance of the standard BO flow.


\textbf{Contributions}
This paper introduces a Robust Multi-Objective Bayesian Optimization framework to pursue a set of optimal solutions that considers Input Uncertainty  (RMOBO-IU). In order to handle the input uncertainty, we optimize a robust objective function, defined as the mean of the objective distribution induced by the input uncertainty, also known as Bayes risk \citep{beland2017bayesian} (see Fig. \ref{fig:idea_demo}) \citep{deb2005searching}. To guarantee a data-efficient inference of this quantity, we construct a Robust Gaussian Process (R-GP), where a deterministic GP realization of the Bayes risk can be obtained using the Sample Average Approximation (SAA) \citep{kleywegt2002sample, balandat2019botorch}.

\begin{figure*}[h]
\centering 
	\subfigure[]
	{ 
	\begin{minipage}{5cm}
		\centering 
\includegraphics[width=1.\textwidth]{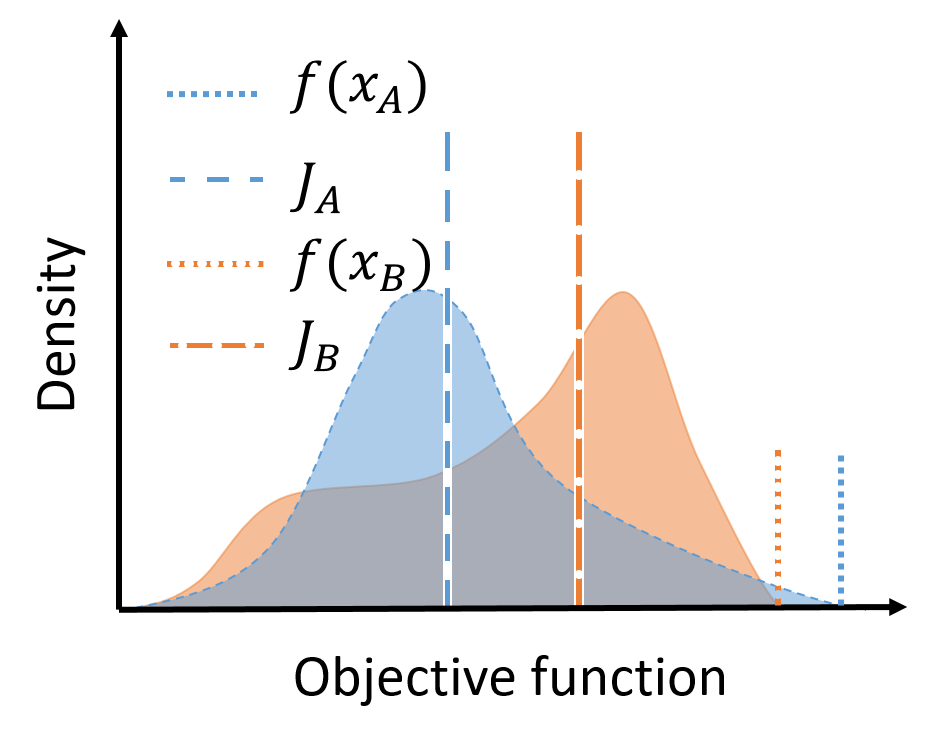} 
	\end{minipage}
	}
	\subfigure[]
	{ 
	\begin{minipage}{5cm}
		\centering 
		\includegraphics[width=1.1\textwidth]{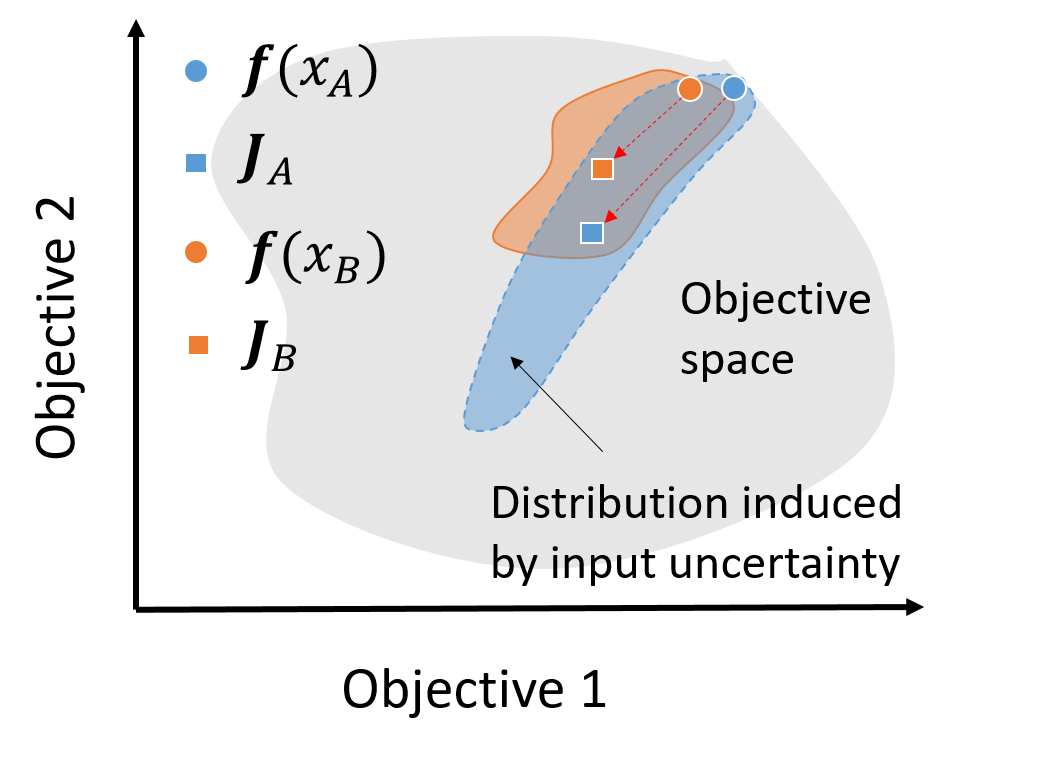} 
	\end{minipage}
	}
	\caption{Comparing two solutions in a maximization problem using their Bayes risk measures $J_A$ and $J_B$, i.e., the expectation of the objective function under uncertainty as defined in Eq. \ref{Eq: main_express} a) Single-objective: candidate $\boldsymbol{x}_B$ (orange) is superior over $\boldsymbol{x}_A$ (blue) as its Bayes risk measure $J_B$ is higher. b) Multi-objective: candidate $\boldsymbol{x}_B$ is preferable as $\boldsymbol{J}_B$ dominates the Bayes risk $\boldsymbol{J}_A$ of candidate $\boldsymbol{x}_A$.} 
\label{fig:idea_demo}	
\end{figure*}

Note that there is a mismatch in the type of uncertainty provided by the R-GP and the uncertainty expected by a common myopic acquisition function, as the latter usually implicitly assumes that this uncertainty comes from a random variable that is directly observable. In order to mitigate this issue, we propose a two-stage approach that can handle existing acquisition functions, including myopic acquisition functions which are commonly used in MOBO. The proposed flexible RMOBO-IU framework, illustrated in Fig. \ref{fig:RMOBO_flow} and detailed in Algorithm. \ref{Alg: rMOBO_alg}, can be used with existing acquisition functions and with different input uncertainty distributions. The effectiveness of this novel method has been demonstrated on several synthetic functions.

The key contributions can be highlighted as:
 \begin{enumerate}
    \item A \textbf{Bayesian optimization taxonomy} for robust multi-objective optimization. 
   \item A deterministic Robust Gaussian Process (R-GP), using the efficient Sample Average Approximation (SAA) based Monte Carlo kernel expectation approximation (SAA-MC KE) to infer the Bayes risk, with a proper complexity analysis. 
    \item We highlight some problems when applying a myopic acquisition function with a robust Gaussian Process and present a novel nested active learning policy to alleviate these.  
    \item New synthetic benchmark problems for robust multi-objective Bayesian optimization under input uncertainty. 
 \end{enumerate}

\begin{figure*}[h]
\centering 
\includegraphics[width=1.1\textwidth]{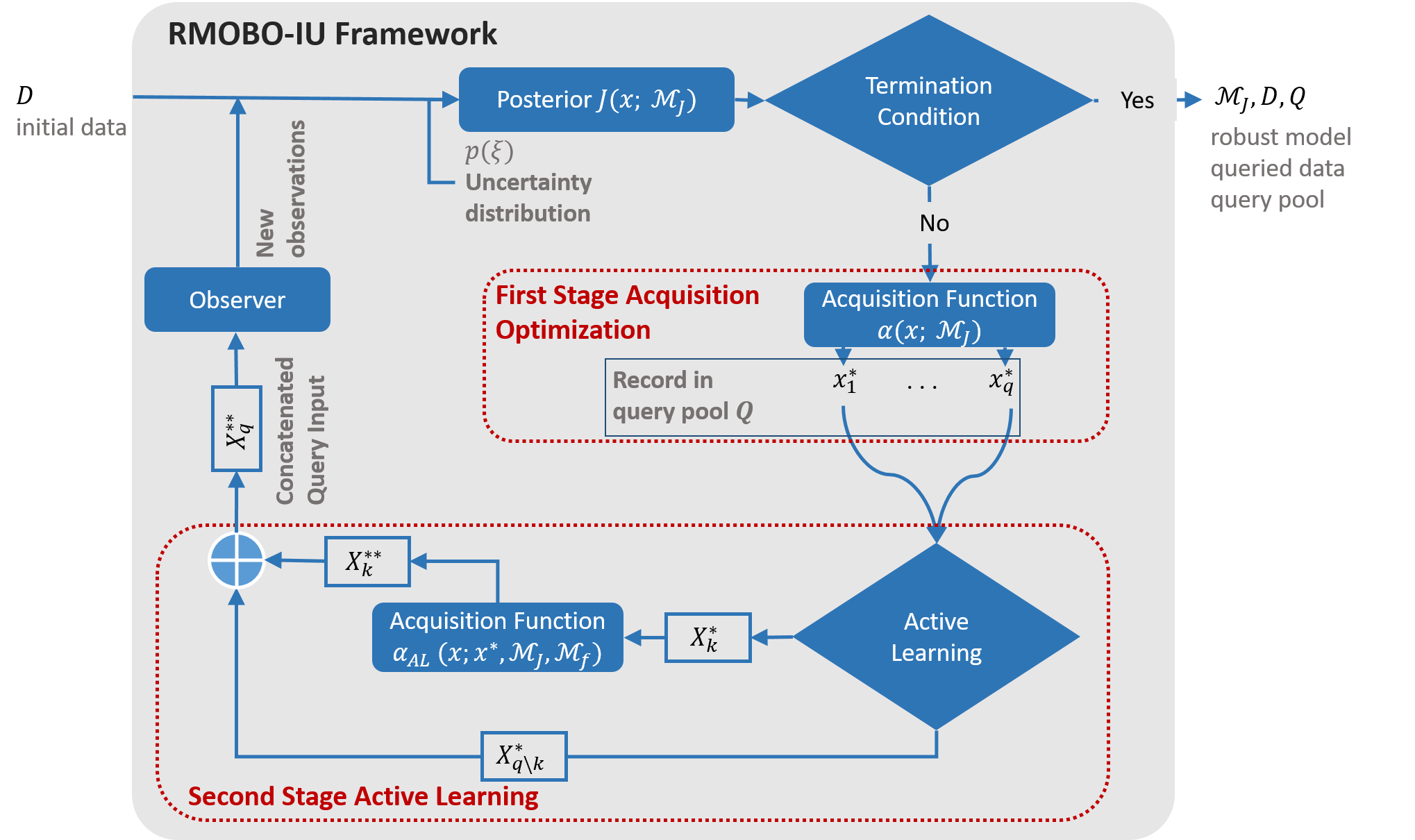} 
	\caption{RMOBO-IU flowchart: The Bayes risk $J$ of the objectives are derived from Gaussian processes $\mathcal{M}_{\boldsymbol{J}}$. In the first stage, an off-the-shelf acquisition function is used to select ($q$) query points: $\boldsymbol{X}_q^{*} := \{\boldsymbol{x}_1^{*}, ..., \boldsymbol{x}_q^{*}\}$. For the second stage, these points are then re-investigated by an Active Learning (AL) acquisition function if the AL process activation condition has been met. The updated query points are then fed to the expensive objective observer and augment the training data. The iteration loop continues until a termination condition has been met. Eventually, the robust model, queried data, and the query pool can be utilized to make optimal robust recommendations.} 
\label{fig:RMOBO_flow}	
\end{figure*}

The remaining of the paper is structured as follows. First, the background and related techniques are described in section 2. The RMOBO-IU framework, including the model description, is introduced in section 3. The numerical experiments are presented in section 4. Conclusions are provided in section 5.

\section{Preliminaries and Related Work}

\subsection{Preliminaries}

\noindent\textbf{Multi-Objective Optimization (MOO)} methods search for optimal solutions considering multiple objectives simultaneously. This can be mathematically expressed as finding the optimum of a vector-valued function $\boldsymbol{f}:=\{f_1, ..., f_M\}$ in a bounded design space $\mathcal{X} \subset \mathbb{R}^d $, where $M$ represents the number of objectives. In the context of MOO, the comparison of different candidates is done through a \textbf{ranking} mechanism $A_{rank}$. Considering the goal of $\textbf{maximizing}$ each objective function, a candidate $\boldsymbol{x}$ is preferable to $\boldsymbol{x}'$ if $\forall j \in M: f_j(\boldsymbol{x}) \geq f_j(\boldsymbol{x}')$ and $\exists j \in M: f_j(\boldsymbol{x}) > f_j(\boldsymbol{x}')$. This specific ranking strategy is termed as \textbf{dominance} ($\succ$) and described as $\boldsymbol{f}(\boldsymbol{x})$ dominates $\boldsymbol{f}(\boldsymbol{x}')$: $\boldsymbol{f}(\boldsymbol{x}) \succ \boldsymbol{f}(\boldsymbol{x}')$. In MOO, the candidate $\boldsymbol{x}$ is defined as \textbf{Pareto optimal input} if $\not\exists \boldsymbol{x}' \in \mathcal{X}$ such that $\boldsymbol{f}(\boldsymbol{x}') \succ \boldsymbol{f}(\boldsymbol{x})$. In this case, $\boldsymbol{f}(\boldsymbol{x})$ is defined as a \textbf{Pareto optimal point}. Given that different objectives usually conflict with each other, MOO seeks for a \textbf{Pareto frontier} $\mathcal{F}^*$ that consists of all the objective values $\boldsymbol{f}$ of the Pareto optimal solutions in the bounded design space $\mathcal{X}$: $\mathcal{F}^*:=\{\boldsymbol{f}\in \mathbb{F}_{\boldsymbol{f}} \vert \not\exists \boldsymbol{f}_{{\boldsymbol{x}'}} \in \mathbb{F}_{\boldsymbol{f}}\ s.t.\ \boldsymbol{f}_{\boldsymbol{x}'} \succ \boldsymbol{f} \}$, where $\mathbb{F}_{\boldsymbol{f}}: = \{\boldsymbol{x} \in \mathcal{X} \vert \boldsymbol{f}(\boldsymbol{x})\}$. 

In many scenarios, the vector-valued function $\boldsymbol{f}$ does not have a closed-form expression, and observing the function value may have a high computational cost. For this class of problems, it is of paramount importance to restrict the number of function queries when searching for $\mathcal{F}^*$.   

\noindent\textbf{Bayesian Optimization} (BO) \citep{jones1998efficient}  is a sequential
model-based approach to solving optimization problems efficiently \citep{shahriari2015taking}. Starting with a few training samples $D=\{\boldsymbol{X}, \boldsymbol{Y}\}$, it builds a Bayesian posterior model $\mathcal{M}$ (with a Gaussian Process (GP) as a common choice \citep{rasmussen2003gaussian}), as a computationally efficient \textbf{surrogate model} of $f$. Given the predictive distribution from the surrogate model, an \textbf{acquisition function} can be defined as a measure of informativeness for any point $\boldsymbol{x}$ in the design space. It is hence able to search and query the most informative candidate $\{\boldsymbol{x}, f(\boldsymbol{x})\}$ to augment the dataset $D$ and update $\mathcal{M}$ accordingly. This process of refining the posterior model and searching for optimal candidates can be conducted sequentially until a predefined stopping criterion has been met. Eventually, the final model $\mathcal{M}$ and the dataset $D$ can be utilized for recommending optimal solutions. The same paradigm is usually referred to as \textbf{Multi-Objective Bayesian Optimization (MOBO)} when $f$ is vector-valued, and the goal is searching for the Pareto frontier $\mathcal{F}^*$.

\noindent\textbf{Input Uncertainty} is a common type of uncertainty that is studied in this paper. Suppose we would like to implement a configuration $\boldsymbol{x}$. The input noise, which can be formulated as an additive noise term sampled from a distribution $\boldsymbol{\xi} \sim p(\boldsymbol{\xi})$, could result in a different implementation  $\boldsymbol{x + \xi}$ that can worsen the performance. The additive noise distribution $\boldsymbol{\xi}$ results in a distribution of possible objective function values $p(\boldsymbol{f}(\boldsymbol{x}+\boldsymbol{\xi})\vert \boldsymbol{\xi})$, which is refereed to as the \textbf{objective distribution}.

\subsection{Related Work}

Several approaches have been proposed to link the robust MOO methodology with a GP surrogate model \citep{xia2014utilizing, zhou2018multi, rivier2018surrogate, abbas2016multiobjective}. \cite{xia2014utilizing} consider the worst-case robustness scenario, for which the worst objective function is extracted from the GP. \cite{zhou2018multi} introduce a GP surrogate model assisted multi-objective robust optimization strategy based on \cite{10.1115/1.2202884}, where the GP acts as an efficient surrogate and hence, as a cheap intermediary for a genetic algorithm to search for the optimum. In a more probabilistic setting, \cite{rivier2018surrogate} propose an interesting bounding box-based efficient MOO framework. For each observation, a conservative bounding box is constructed based on some robustness measures approximated by MC sampling on the surrogate model, with the assumption that an extra aleatory variable can be modeled with a uniform distribution built upon the bounding box. The concept of probability of box-based Pareto dominance is utilized to compare against different aleatory variables hence different observations. Subsequently, it can search for the optimum or improve the surrogate model accuracy accordingly. Nevertheless, while equipped with a GP as a probabilistic surrogate model, the robustness measure of the above-mentioned approaches are usually extracted in a non-Bayesian way as a point estimation from the posterior mean, and the surrogate model refinement step must be defined explicitly. A more principled BO-like RMOBO framework has yet to be revealed.

\section{RMOBO-IU Framework}  
\subsection{Optimizing Bayes Risk versus Optimizing the Original Objective Function} \label{seq: difference_pareto}
The Bayes risk is utilized as objective in the RMOBO-IU framework: 

\begin{equation}
\begin{aligned}
& \underset{\boldsymbol{x} \in \mathcal{X} \subset \mathbb{R}^d}{\text{Maximize}}\ J_1(\boldsymbol{x}), J_2(\boldsymbol{x}), ..., J_M(\boldsymbol{x})\\
& \text{where}\ J(\boldsymbol{x}) = \int f(\boldsymbol{x}+\boldsymbol{\xi}) p(\boldsymbol{\xi}) d\boldsymbol{\xi}\\
\label{Eq: main_express}
\end{aligned}
\end{equation}


Given the fact that we are optimizing the Bayes risk $\boldsymbol{J}$, we use $\mathcal{F}_{\boldsymbol{J}}^*$ and $\mathcal{F}_{\boldsymbol{f}}^*$ to represent the Pareto frontier of the robust and non-robust optimization problem (i.e., optimize the original objective function $\boldsymbol{f}$), respectively. It is natural to wonder what the difference is between $\mathcal{F}_{\boldsymbol{f}}^*$ and $\mathcal{F}_{\boldsymbol{J}}^*$. Using the objective space, the difference can be categorized into four different cases \citep{deb2005searching} as shown in Fig. \ref{fig: different_pf}. Except for the first case, the remaining cases clearly show that $\mathcal{F}_{\boldsymbol{J}}^*$ leads to more robust optimal solutions, at least for some parts of the Pareto fronts.

\begin{figure*}[h]
\centering 
	\subfigure[$\mathcal{F}_{\boldsymbol{f}}^*$ maps directly to $\mathcal{F}_{\boldsymbol{J}}^*$.]{ 
	\begin{minipage}{5cm}
		\centering 
		\includegraphics[width=1\textwidth]{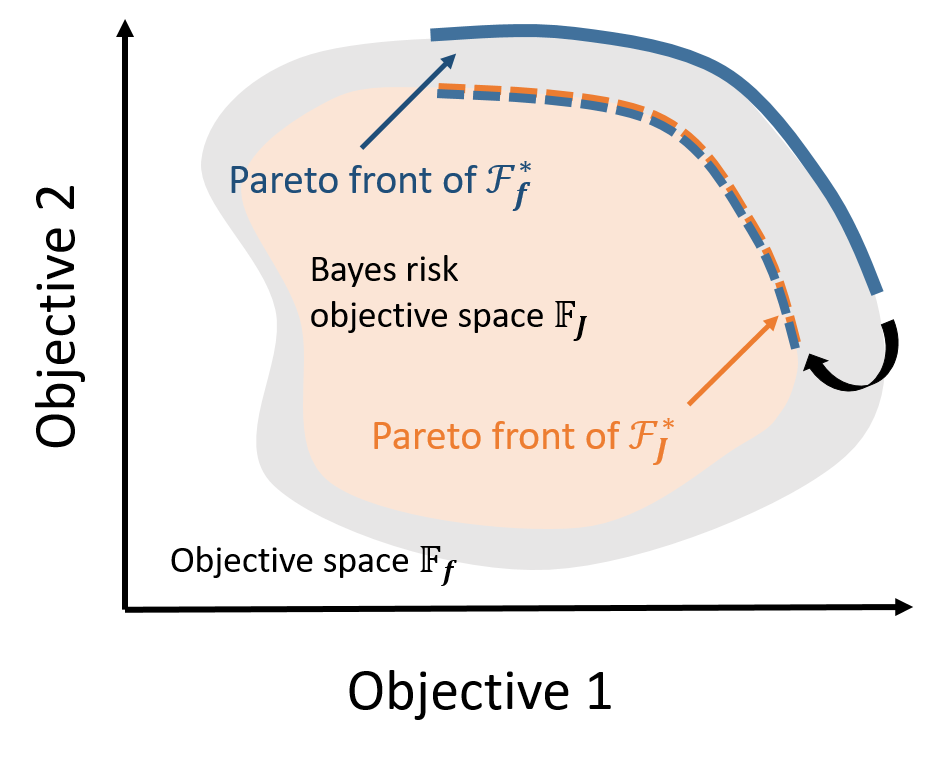} 
	\end{minipage}
	}
	\subfigure[Part of $\mathcal{F}_{\boldsymbol{f}}^*$ remains as $\mathcal{F}_{\boldsymbol{J}}^*$.]{ 
	\begin{minipage}{5cm}
		\centering 
		\includegraphics[width=1\textwidth]{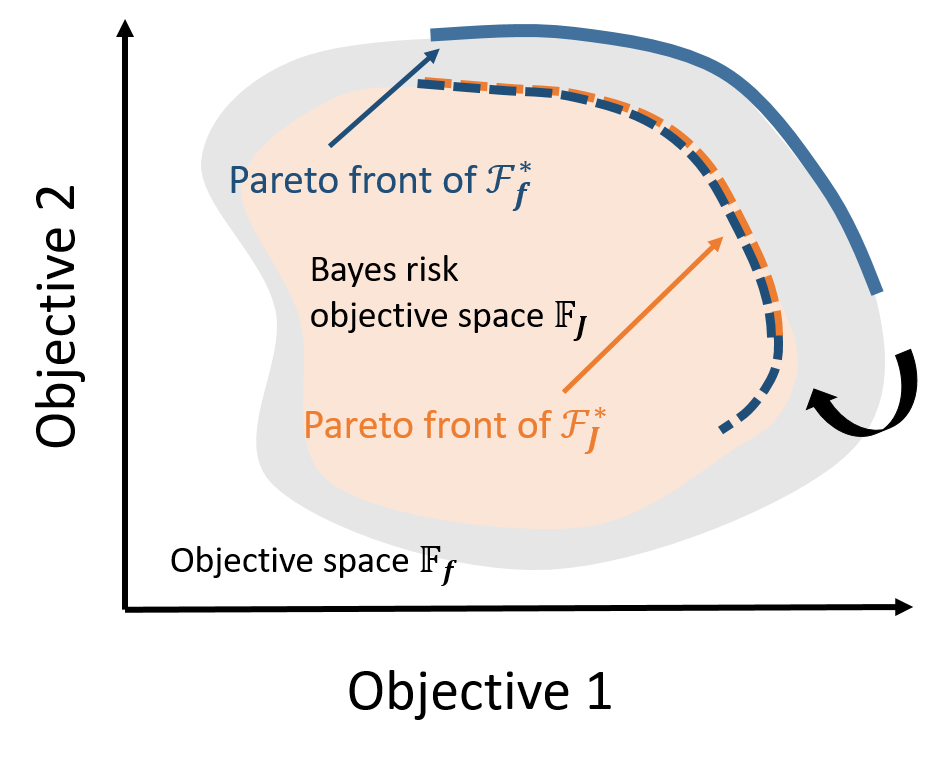} 
	\end{minipage}
	}
	\subfigure[Part of $\mathcal{F}_{\boldsymbol{f}}^*$ remains in $\mathcal{F}_{\boldsymbol{J}}^*$.]{ 
	\begin{minipage}{5cm}
		\centering 
		\includegraphics[width=1\textwidth]{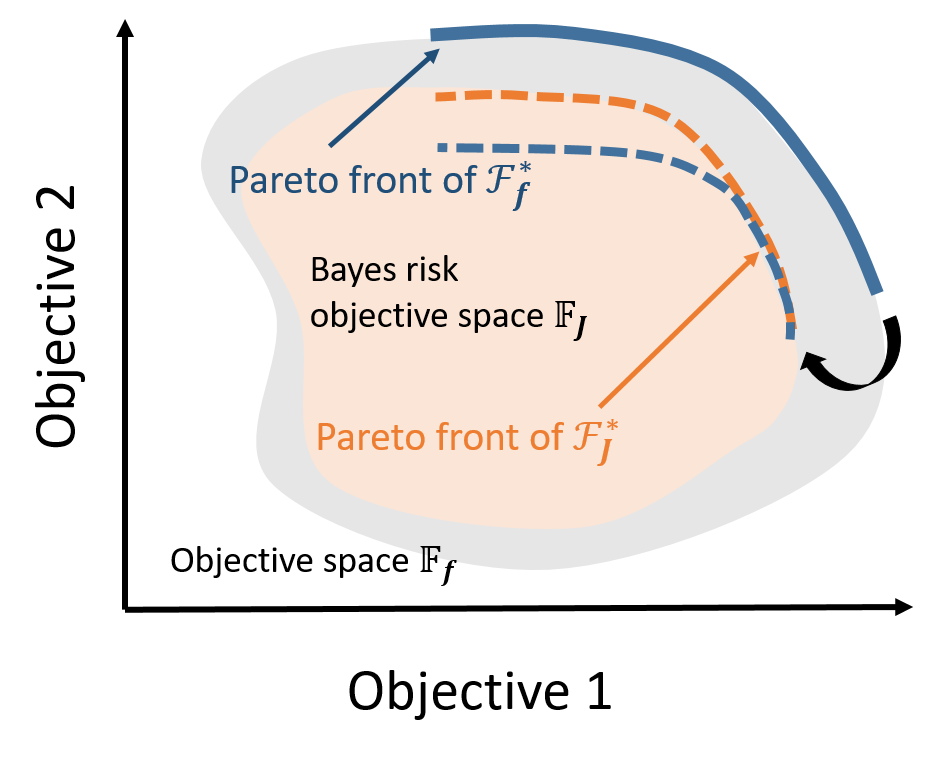} 
	\end{minipage}
	}
	\subfigure[$\mathcal{F}_{\boldsymbol{f}}^*$ is not robust.]{ 
	\begin{minipage}{5cm}
		\centering 
		\includegraphics[width=1\textwidth]{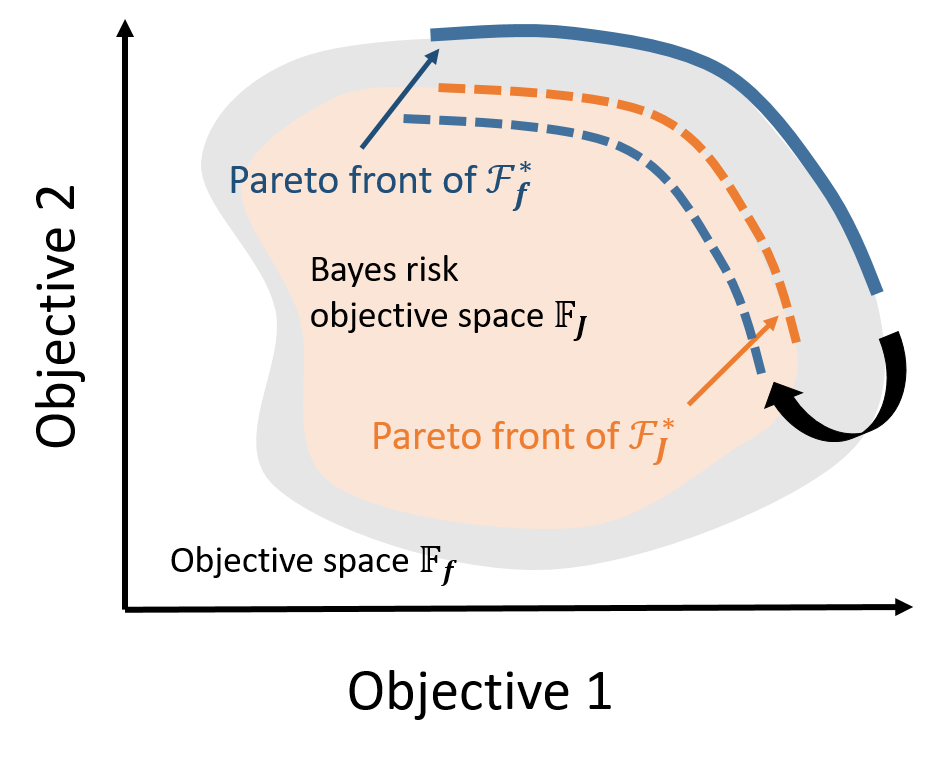} 
	\end{minipage}
	}
	\caption{Four different cases comparing the Pareto front $\mathcal{F}_{\boldsymbol{f}}^*$ with the Pareto front $\mathcal{F}_{\boldsymbol{J}}^*$ considering input uncertainty. Images courtesy of \cite{deb2005searching}.} 
\label{fig: different_pf}	
\end{figure*}

It might be difficult to determine whether a robust Pareto front exists that is different from $\mathcal{F}_f^*$. This is not trivial to answer due to the agnostic property of the black-box function $\boldsymbol{J}$. Nevertheless, from a practitioner perspective, we define a sufficient condition based on the objective functions which helps to determine whether a distinct robust Pareto front exists:

\begin{prop}
If $\ \exists f_i \in \boldsymbol{f}, J_i \in \boldsymbol{J}$, s.t. for $ \boldsymbol{x}_{f_i^*} : = \underset{\boldsymbol{x} \in \mathcal{X}}{argmax}\       f_i(\boldsymbol{x})$,  $\boldsymbol{x}_{J_i^*} : = \underset{\boldsymbol{x} \in \mathcal{X}}{argmax}\  J_i(\boldsymbol{x})$.  $\boldsymbol{x}_{f_i^*} \neq \boldsymbol{x}_{J_i^*}$, $\boldsymbol{x}_{J_i^*}$ is unique and $\boldsymbol{f}_{\boldsymbol{x}_{J_i^*}} \not\in \mathcal{F}_{\boldsymbol{f}}^*$.  

\noindent Then: \\ $\exists \boldsymbol{x}_{diff} \in \mathcal{X}$ such that $\boldsymbol{J}(\boldsymbol{x}_{diff}) \in \mathcal{F}_{\boldsymbol{J}}^*$ while $\boldsymbol{f}(\boldsymbol{x}_{diff}) \not\in \mathcal{F}_{\boldsymbol{f}}^*$, and  $\mathcal{F}_{\boldsymbol{f}}^* \neq \mathcal{F}_{\boldsymbol{J}}^*$.
\label{prop: usefulness}
\end{prop}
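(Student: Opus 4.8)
The plan is to exhibit the required point explicitly by taking $\boldsymbol{x}_{diff} := \boldsymbol{x}_{J_i^*}$, the unique maximizer of the single robust objective $J_i$ supplied by the hypothesis. With this choice one half of the conclusion is immediate: $\boldsymbol{f}(\boldsymbol{x}_{diff}) = \boldsymbol{f}_{\boldsymbol{x}_{J_i^*}} \not\in \mathcal{F}_{\boldsymbol{f}}^*$ is exactly the last assumption. All the actual work therefore concentrates on the complementary claim $\boldsymbol{J}(\boldsymbol{x}_{diff}) \in \mathcal{F}_{\boldsymbol{J}}^*$.

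I would isolate this as a small lemma: the unique global maximizer of a single component $J_i$ is necessarily Pareto optimal for $\boldsymbol{J}$, and prove it by contradiction. Suppose $\boldsymbol{J}(\boldsymbol{x}_{J_i^*})$ were dominated, so that some $\boldsymbol{x}' \in \mathcal{X}$ satisfies $\boldsymbol{J}(\boldsymbol{x}') \succ \boldsymbol{J}(\boldsymbol{x}_{J_i^*})$. By the definition of dominance for maximization this forces $J_j(\boldsymbol{x}') \geq J_j(\boldsymbol{x}_{J_i^*})$ for every $j$, and in particular $J_i(\boldsymbol{x}') \geq J_i(\boldsymbol{x}_{J_i^*})$. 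Since $\boldsymbol{x}_{J_i^*}$ maximizes $J_i$, this inequality must be an equality, and uniqueness of the maximizer then forces $\boldsymbol{x}' = \boldsymbol{x}_{J_i^*}$. But a point cannot strictly dominate itself, contradicting the strict-inequality-in-some-coordinate part of $\succ$. Hence no dominating $\boldsymbol{x}'$ exists and $\boldsymbol{J}(\boldsymbol{x}_{J_i^*}) \in \mathcal{F}_{\boldsymbol{J}}^*$, which completes the existence claim for $\boldsymbol{x}_{diff}$.

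For the second assertion I would read $\mathcal{F}_{\boldsymbol{f}}^* \neq \mathcal{F}_{\boldsymbol{J}}^*$ at the level of their generating solution sets $X_{\boldsymbol{f}}^* := \{\boldsymbol{x} \in \mathcal{X} \mid \boldsymbol{f}(\boldsymbol{x}) \in \mathcal{F}_{\boldsymbol{f}}^*\}$ and $X_{\boldsymbol{J}}^* := \{\boldsymbol{x} \in \mathcal{X} \mid \boldsymbol{J}(\boldsymbol{x}) \in \mathcal{F}_{\boldsymbol{J}}^*\}$, consistent with the way the four cases are phrased. The constructed $\boldsymbol{x}_{diff}$ lies in $X_{\boldsymbol{J}}^*$ but not in $X_{\boldsymbol{f}}^*$, so these sets cannot coincide, which is the meaningful sense in which the robust and non-robust frontiers differ.

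I would also flag which hypotheses carry the argument. The genuinely load-bearing ingredients are the uniqueness of $\boldsymbol{x}_{J_i^*}$, needed for the lemma, and the off-front condition $\boldsymbol{f}_{\boldsymbol{x}_{J_i^*}} \not\in \mathcal{F}_{\boldsymbol{f}}^*$, which supplies the second half directly; the condition $\boldsymbol{x}_{f_i^*} \neq \boldsymbol{x}_{J_i^*}$ is essentially a consistency check, since if the two maximizers coincided the same lemma applied to $f_i$ would place $\boldsymbol{f}_{\boldsymbol{x}_{J_i^*}}$ on $\mathcal{F}_{\boldsymbol{f}}^*$ and contradict the off-front assumption. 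The only place demanding care, and the closest thing to an obstacle, is making the lemma's contradiction airtight: correctly handling the boundary equality $J_i(\boldsymbol{x}') = J_i(\boldsymbol{x}_{J_i^*})$, collapsing it to $\boldsymbol{x}' = \boldsymbol{x}_{J_i^*}$ via uniqueness, and then invoking strictness of $\succ$ to conclude.
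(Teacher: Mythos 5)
Your proposal is correct and follows essentially the same route as the paper: both take $\boldsymbol{x}_{diff}:=\boldsymbol{x}_{J_i^*}$, use uniqueness of the maximizer of $J_i$ to place $\boldsymbol{J}(\boldsymbol{x}_{diff})$ on $\mathcal{F}_{\boldsymbol{J}}^*$, read the off-front condition directly from the hypothesis, and conclude that the two frontiers differ because their generating solution sets do. You are in fact slightly more careful than the paper, which asserts the non-domination of the unique componentwise maximizer ``according to the definition of Pareto dominance'' without spelling out the equality-then-uniqueness-then-no-self-domination argument you give, and your observation that $\boldsymbol{x}_{f_i^*}\neq\boldsymbol{x}_{J_i^*}$ is not load-bearing is accurate.
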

\begin{proof}
Given $\boldsymbol{x}_{f_i^*} : = \underset{x \in \mathcal{X}}{argmax}\  f_i(\boldsymbol{x})$,  $\boldsymbol{x}_{J_i^*} : = \underset{\boldsymbol{x} \in \mathcal{X}}{argmax}\  J_i(\boldsymbol{x})$, having $\boldsymbol{x}_{f_i^*} \neq \boldsymbol{x}_{J_i^*}$ means $f_i(\boldsymbol{x}_{f_i^*}) > f_i(\boldsymbol{x}_{J_i^*})$ and $J_i(\boldsymbol{x}_{f_i^*}) < J_i(\boldsymbol{x}_{J_i^*})$, according to the definition of Pareto dominance, Let $\boldsymbol{x}_{diff}:=\boldsymbol{x}_{J_i^*}$, we have $\boldsymbol{J}(\boldsymbol{x}_{diff}) \in \mathcal{F}_{\boldsymbol{J}}^*$ and $\boldsymbol{f}(\boldsymbol{x}_{diff}) \not\in \mathcal{F}_{\boldsymbol{f}}^*$. Meanwhile, as
$\nexists \boldsymbol{x} \in \{\boldsymbol{x} \in \mathcal{X}\vert \boldsymbol{f}(\boldsymbol{x}) \in \mathcal{F}_{\boldsymbol{f}}^*\}$ such that the $i$th component of its outcome: $J_i(\boldsymbol{x}) \geq J_i(\boldsymbol{x}_{diff})$, hence $\mathcal{F}_{\boldsymbol{f}}^* \neq \mathcal{F}_{\boldsymbol{J}}^*$ and the proposition holds. 
\end{proof}

The proposition conveys that if the objective function $f_i$ has a different global maximum location $\boldsymbol{x}_{J_i^*}$ (for Bayes risk) which is also not Pareto optimal in the objective space, then there will be a distinct robust Pareto front.


\subsection{Inference of the Bayes Risk} \label{Sec:Infer_J}
\subsubsection{Robust Gaussian Process (R-GP)}

\begin{figure*}[!htb]
\centering 
	\subfigure{ 
	\begin{minipage}{10cm}
		\centering 
		\includegraphics[width=1.\textwidth]{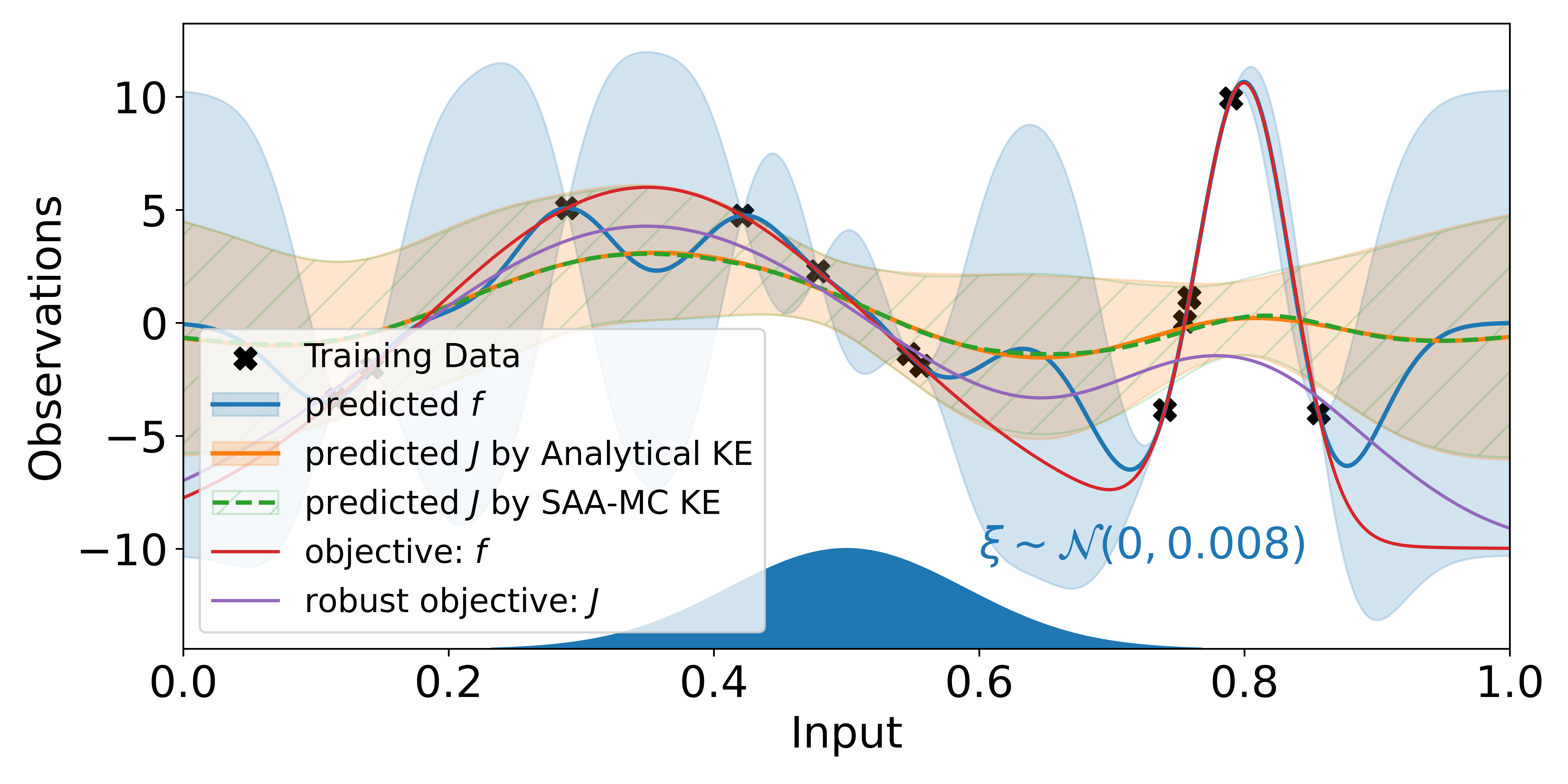} 
	\end{minipage}
	\label{Fig: J_compare}
	}
	\caption{1D example of a standard GP (blue) and Robust GP (orange and green) considering a Gaussian input uncertainty distribution (with variance 0.008), illustrated at input $x=0.5$. It can be observed that the Bayes risk favors the less risky maxima. The comparison of the posterior distribution using analytical KE (orange) and SAA-MC based KE (green) GP is also shown. The SAA-MC KE results in a differentiable approximation of the GP posterior. } 
\label{fig:GP_J_pred}	
\end{figure*}

Given limited training data $D$, we specify an independent GP prior on each black-box function $f$. Hence, the $i$th GP model $\mathcal{M}_{f_i}$'s posterior representing $f_i$ at $\boldsymbol{x}$ is: 

\begin{equation}
m_f(\boldsymbol{x}\vert D) = \boldsymbol{k}_{f}(\boldsymbol{x})^T\boldsymbol{K}^{-1}\boldsymbol{y}
\end{equation}

\begin{equation}
Cov_{f} (\boldsymbol{x}\vert D) = \boldsymbol{k}_{f}(\boldsymbol{x}, \boldsymbol{x}') - \boldsymbol{k}_{f}(\boldsymbol{x})^T\boldsymbol{K}^{-1}\boldsymbol{k}_{f}(\boldsymbol{x}')
\end{equation}

\noindent where $\boldsymbol{K}$ is the kernel matrix of observations.

Now consider the transformation of expectation through input uncertainty. Since the expectation in Eq. \ref{Eq: main_express} is a linear operator, we can derive a \textbf{robust GP} for the Bayes risk $J_i$ by applying linear transformation rules \citep{rasmussen2003gaussian, papoulis2002probability}, resulting in: 
\begin{equation}
p(J|D, \boldsymbol{x}) = \mathcal{N}(m_J, Cov_J)
\end{equation}
\begin{equation}
m_J(\boldsymbol{x}\vert D) = \boldsymbol{k}_{Jf}(\boldsymbol{x})^T\boldsymbol{K}^{-1}\boldsymbol{y}
\label{Eq: J mean}
\end{equation}
\begin{equation}
Cov_J(\boldsymbol{x}\vert D) = \boldsymbol{k}_{J}(\boldsymbol{x}, \boldsymbol{x}') - \boldsymbol{k}_{Jf}(\boldsymbol{x})^T\boldsymbol{K}^{-1}\boldsymbol{k}_{fJ}(\boldsymbol{x}')
\label{Eq. prd_J_cov}
\end{equation}

\noindent where $\boldsymbol{k}_{Jf}$ and $\boldsymbol{k}_{J}$ are defined using the following Kernel Expectation (KE):

\begin{equation}
\boldsymbol{k}_{Jf} (\boldsymbol{x})= \int k_f(\boldsymbol{x}+ \boldsymbol{\xi})p(\boldsymbol{\xi})d\boldsymbol{\xi}
\label{Kjf}
\end{equation}

\begin{equation}
\begin{aligned}
\boldsymbol{k}_J (\boldsymbol{x}, \boldsymbol{x}')&= \int \int k_f(\boldsymbol{x}+ \boldsymbol{\xi}, \boldsymbol{x}'+ \boldsymbol{\xi}')p(\boldsymbol{\xi})p(\boldsymbol{\xi}')d\boldsymbol{\xi}d\boldsymbol{\xi}'
\end{aligned}
\label{K_J}
\end{equation}

For some kernels and uncertainty distributions $p(\boldsymbol{\xi})$, an analytical expression exists for the KE. One of the most well-known analytical KE is the squared exponential kernel under Gaussian input uncertainty \citep{dallaire2009learning}, see Fig. \ref{Fig: J_compare} (orange posterior mean and uncertainty interval). Unfortunately, for more generic cases, an analytical expression is non-trivial to obtain. In this case, one can defer to Monte Carlo (MC) approximations\footnote{To improve the numerical stability, we leverage the methodology of \cite{higham1988computing} with a nugget term to search for the nearest positive definite matrices for Eq. \ref{Eq: rGP_Cov} when a full covariance posterior matrix is needed.}:
\begin{equation}
\begin{aligned}
m_J(\boldsymbol{x}\vert D) & \approx \frac{1}{N}\sum_{i=1}^N \left[ k_f(\boldsymbol{x}+ \boldsymbol{\xi}_i)\right]\boldsymbol{K}^{-1}\boldsymbol{y}
\label{Eq: rGP_mean}
\end{aligned}
\end{equation}

\begin{equation}
\begin{aligned}
Cov_J(\boldsymbol{x}\vert D) & \approx \frac{1}{N} \sum_{i=1}^N \left[k_f(\boldsymbol{x} + 
\boldsymbol{\xi}_i, \boldsymbol{x}' + \boldsymbol{\xi}'_i)  -  k_f(\boldsymbol{x}+ \boldsymbol{\xi}_i, \boldsymbol{x}')\boldsymbol{K}^{-1} k_f(\boldsymbol{x}, \boldsymbol{x}'+ \boldsymbol{\xi}'_i) \right] 
\label{Eq: rGP_Cov}
\end{aligned}
\end{equation}

While the common approach is to redraw samples $\boldsymbol{\xi}$ for every evaluation point $\boldsymbol{x}$ to obtain the posterior predictive distribution, we apply the sample average approximation \citep{kleywegt2002sample, balandat2019botorch} through the MC based kernel expectation (SAA-MC KE). This is illustrated in Fig. \ref{fig:GP_J_pred} (green posterior mean and uncertainty interval). Given a differentiable kernel, by holding MC samples fixed: $E = \{\boldsymbol{\xi}^1, ..., \boldsymbol{\xi}^N\}$ for KE,  we are able to provide a deterministic and differentiable approximation of the posterior distribution, which is easily utilizable by off-the-shelf acquisition functions. Furthermore, we can still use gradient-based optimizers for optimizing the acquisition function.

\subsubsection{Inference Complexity}

\begin{table*}[h]
\caption{Inference complexity of a standard GP and R-GP, where $n_{tr}$ is the training sample size and $n_{test}$ is the test sample size. $N$ is the MC sample size for the kernel expectation.}
\begin{center}
\begin{tabular}{llll}
\hline 
& standard GP  & R-GP \\
\hline
 Computation  & & \\
 Complexity & & \\
\hline
Not Full-Cov Inference   & $n_{tr}n_{test}$  & $  N \cdot n_{tr}n_{test}$\\
Full-Cov Inference & $n_{tr}n_{test}^2$  & $N \cdot n_{tr}n_{test}^2$ \\ 
\hline
Memory   \\
Consumption (Parallized)\\
\hline
Not Full-Cov Inference   & $max(n_{test}n_{tr})$    & 
$N \cdot max( n_{test}n_{tr})$\\
Full-Cov Inference & $max(n_{test}^2, n_{test}n_{tr})$ & $N \cdot max( n_{test}^2,  n_{test}n_{tr})$\\
\hline\\
\end{tabular}
\end{center}
\label{Tab:acquisition_evaluation_time}
\end{table*}

We derive the computation complexity of inferencing the Bayes risk $J$ with respect to the test sample size $n_{test}$, as well as the memory consumption\footnote{We report the single storage component that can possibly take the maximum memory, and we do not consider the memory consumption for the original kernel matrix storage as it is not correlated with $n_{test}$.} in Table. \ref{Tab:acquisition_evaluation_time}. Fortunately, the main extra computation effort only affects the inference stage instead of the model training stage. The latter is usually regarded as the main bottleneck of GPs. For common GP implementations, the introduction of MC samples increases the  complexity $N$ times, i.e., it grows linear with the number of MC samples. We propose to parallelize the computation through $N$ MC samples and so, we trade of the time increment against memory consumption. 


\subsection{Two-Stage Acquisition Function Optimization Process} \label{Sec:Acq}
\subsubsection{First Stage: Acquisition Optimization}
As the R-GP provides a (multivariate) normal posterior distribution $p(\boldsymbol{J} \vert \boldsymbol{x}, D)$, it is convenient to utilize existing (multi-objective) acquisition functions to search for the Pareto frontier $\mathcal{F}_{\boldsymbol{J}}^*$. We use common myopic acquisition functions for MOBO (e.g., Expected Hypervolume Improvement (EHVI) \citep{yang2019efficient}, Parallel Expected Hypervolume Improvemet (qEHVI) \citep{daulton2020differentiable, daulton2021parallel} and Expected Hypervolume Probability of Improvement (EHPI) \citep{yang2019efficient, couckuyt2014fast}), with a brief remark below. 

Recall that many acquisition functions can be written in the following form \citep{wilson2018maximizing}:

\begin{equation}
    \alpha(\boldsymbol{X}_q; \psi, D)  = \int_{\boldsymbol{J}_{\boldsymbol{X}_q}}\mathcal{\ell}(\boldsymbol{J}_{\boldsymbol{X}_q}; \psi) p(\boldsymbol{J}_{\boldsymbol{X}_q}; m_{J}, Cov_{J})d\boldsymbol{J}_{\boldsymbol{X}_q}
\label{Eq: myopic_acq}
\end{equation}
\noindent where $\ell$ denotes the utility function (using the acquisition function parameter $\psi$), $\boldsymbol{X}_q:=\{\boldsymbol{x}_1, ..., \boldsymbol{x}_q\}$ represents a batch of $q$ input candidates. For myopic acquisition functions, $\psi$ can be defined as the \textit{current best Pareto frontier} inferred using the R-GPs: $\psi := A_{rank}(\boldsymbol{J}_D \vert \mathcal{M}_{\boldsymbol{J}}, D)$, and results in the following expression:

\begin{equation}
\begin{aligned}
    \alpha(\boldsymbol{X}_q; \psi, D)  & =  \int_{\boldsymbol{J}_D}\int_{\boldsymbol{J}_{\boldsymbol{X}_q}} \mathcal{\ell}(\boldsymbol{J}_{\boldsymbol{X}_q}; A_{rank}(\boldsymbol{J}_D \vert \mathcal{M}_{\boldsymbol{J}}, D)) \\ &p(\boldsymbol{J}_{\boldsymbol{X}_q}; m_{\mathcal{\boldsymbol{J}}_{\boldsymbol{X}_q}}, Cov_{\mathcal{\boldsymbol{J}}_{\boldsymbol{X}_q}})p(\boldsymbol{J}_D;m_{\mathcal{\boldsymbol{J}}_{D}}, Cov_{\mathcal{\boldsymbol{J}}_{D}})d\boldsymbol{J}_{\boldsymbol{X}_q}d\boldsymbol{J}_D\\ & \approx   \int_{\boldsymbol{J}_{\boldsymbol{X}_q}}\mathcal{\ell}(\boldsymbol{J}_{\boldsymbol{X}_q}; A_{rank}(\overline{\boldsymbol{J}_D} \vert \mathcal{M}_{\boldsymbol{J}}, D)) p(\boldsymbol{J}_{\boldsymbol{X}_q}; m_{\mathcal{\boldsymbol{J}}_{\boldsymbol{X}_q}}, Cov_{\mathcal{\boldsymbol{J}}_{\boldsymbol{X}_q}})d\boldsymbol{J}_{\boldsymbol{X}_q}
\end{aligned}
\label{Eq: myopic_acq_2}
\end{equation}

\noindent The operator $ A_{rank}$ is the non-dominated sorting operation. We note that the extracted current best Pareto frontier is also a distribution due to the fact that the Bayes risk $\boldsymbol{J}$ is not observable. We could simplify the problem by making use of the posterior mean of the R-GP: $A_{rank}(\overline{\boldsymbol{J}_D} \vert \mathcal{M}_{\boldsymbol{J}}, D)$ as an approximation to avoid the integration of Pareto frontier distribution \citep{gramacy2010optimization}, resulting in the last line of Eq. \ref{Eq: myopic_acq_2}. Nevertheless, the distribution of the Pareto frontier can also be considered, for instance, by leveraging MC sampling \citep{daulton2021parallel}. Finally, we remark the last line of Eq. \ref{Eq: myopic_acq_2} can be analytically calculated exactly for EHVI, EHPI, and approximately calculated by qEHVI acquisition functions.

\begin{figure*}[h]
\centering 
\includegraphics[width=0.6\textwidth]{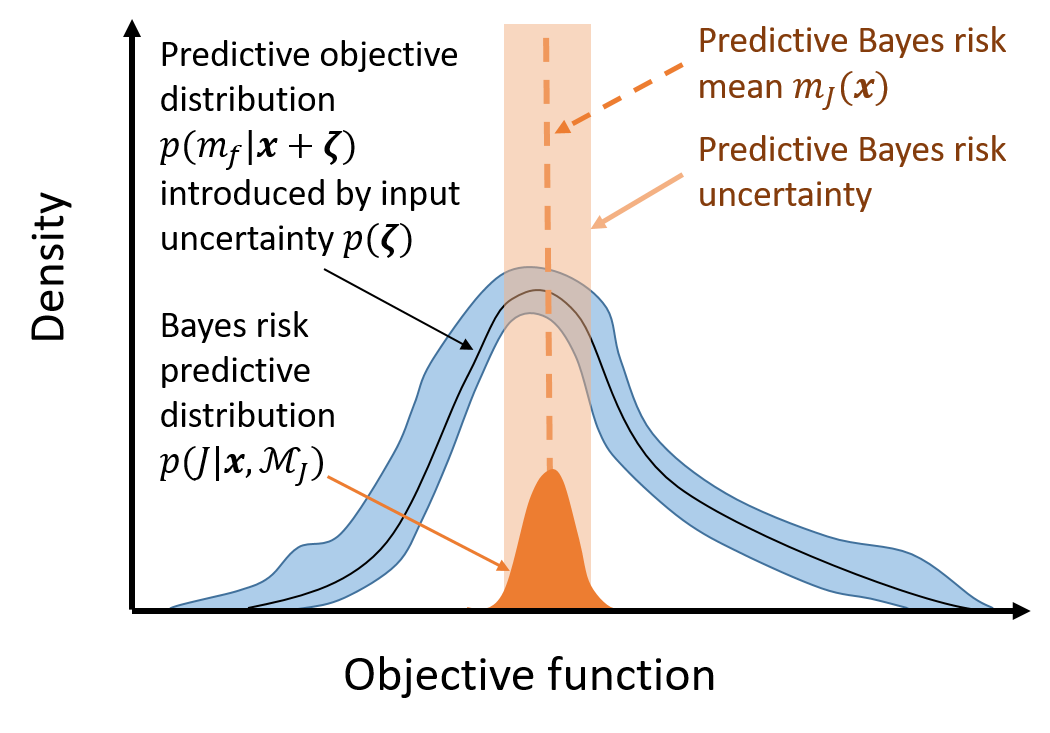}
\caption{Illustration of the non-observable property of the posterior distribution of R-GP at input location $\boldsymbol{x}$. The R-GP prediction of the Bayes risk's uncertainty of is illustrated as the orange shaded area. The blue shaded area represents the uncertainty of the predictive objective distribution which comes from the model $\mathcal{M}_f$. This implies since we do not have direct observation $\{\boldsymbol{x}, \boldsymbol{J}(\boldsymbol{x})\}$, only when the blue shaded area is reduced by sampling the Bayes risk predictive uncertainty will be reduced to zero.} 
\label{Fig: J_uncertainty}
\end{figure*}

\subsubsection{Second Stage: Active Learning for Reducing Uncertainty} \label{Sec: myopic_issues} 

While we can already use the acquisition function to search for the Pareto front $\mathcal{F}_{\boldsymbol{J}}^*$, we note that there is an inconsistency between the R-GP's inference $p(\boldsymbol{J} \vert \boldsymbol{x}, D)$ and what the acquisition functions mentioned above expects. More specifically, as illustrated in Fig. \ref{Fig: J_uncertainty}, the predict variance of the R-GP posterior aggregates uncertainty coming from the input uncertainty and the model approximations  $\mathcal{M}_{\boldsymbol{f}}$. This means the inferred Bayes risk $\boldsymbol{J} \vert \boldsymbol{x}, D$ could still be uncertain (i.e., the predict variance of $p(\boldsymbol{J} \vert \boldsymbol{x}, D)$ doesn't vanish to zero) at $\boldsymbol{x}$ even if the model has already included data at $\boldsymbol{x}$. Nevertheless, the myopic acquisition function, which build on the assumption that its predictive quantity $p(\boldsymbol{J} \vert \boldsymbol{x}, D)$ to be directly observable \citep{iwazaki2021mean, frohlich2020noisy}, cannot handle this inconsistency intrinsically. 

This results in two possible issues when applying of standard BO. First, as the input uncertainty could result in a design that is outside the bounded design space, the inference variance of the Bayes risk cannot be lowered to zero when restricting sampling only inside the design space. This results in the acquisition function adding duplicate samples at boundary locations. Secondly, common acquisition functions will waste resources on the same sample within the design space in a futile effort to reduce uncertainty, resulting in another duplication issue, which can also impose numerical instabilities to the model.

While not explicitly discussed in most of the existing research, we remark that these issues generically exist in single-objective robust BO when performing optimization on the Bayes risk. In order to resolve these issues, we propose an AL policy. We introduce an information-theoretic-based active learning acquisition function. As illustrated in Fig. \ref{Fig: AL_acq}, its intuitive interpretation is that we want to maximally reduce the uncertainty of the predictive distribution of $\boldsymbol{J} \vert D$ at candidate $\boldsymbol{x}^*$. Instead of directly sampling at $\boldsymbol{x}^*$, we seek the candidate that can maximally reduce its uncertainty, which is quantified by differential entropy.


\begin{equation}
    \alpha_{AL} = \mathbb{H}[\boldsymbol{J}(\boldsymbol{x^*} \vert D)] - \mathbb{E}_{\boldsymbol{f}(\boldsymbol{x})}\mathbb{H}[\boldsymbol{J}(\boldsymbol{x^*}\vert D, \{\boldsymbol{x}, \boldsymbol{f}(\boldsymbol{x})\})]
\end{equation}

\noindent where the expectation is taken through all possible $\boldsymbol{f}(\boldsymbol{x})$ described by the GP posterior. Given the assumption that we fixed the GP model $\mathcal{M}$'s hyperparameters during the acquisition optimization, the variance of $\boldsymbol{J}(\boldsymbol{x^*}\vert D, \{\boldsymbol{x}, \boldsymbol{f}(\boldsymbol{x})\})$ is independent of $\boldsymbol{f}(\boldsymbol{x})$, and hence it is sensible to avoid the expensive computation of the one dimensional integration by only making use of the posterior mean of $\boldsymbol{f}(\boldsymbol{x})$:  

\begin{equation}
\begin{aligned}
    \alpha_{AL} &\approx \mathbb{H}[\boldsymbol{J}(\boldsymbol{x}^* \vert D)] - \mathbb{H}[\boldsymbol{J}(\boldsymbol{x}^*\vert D, \{\boldsymbol{x}, \overline{\boldsymbol{f}}(\boldsymbol{x})\})] \\ & = \frac{1}{2} \text{log}\frac{\prod_{i=1}^M\mathbb{V}_{J_i}(\boldsymbol{x}^*\vert D)}{\prod_{i=1}^M\mathbb{V}_{J_i}(\boldsymbol{x}^*\vert D,\{\boldsymbol{x}, \overline{\boldsymbol{f}}(\boldsymbol{x})\})}
\end{aligned}
\end{equation}

\begin{figure*}[h]
\centering 
    \subfigure[Before the AL process]
	{ 
	\begin{minipage}{5.5cm}
		\centering 
\includegraphics[width=1\textwidth]{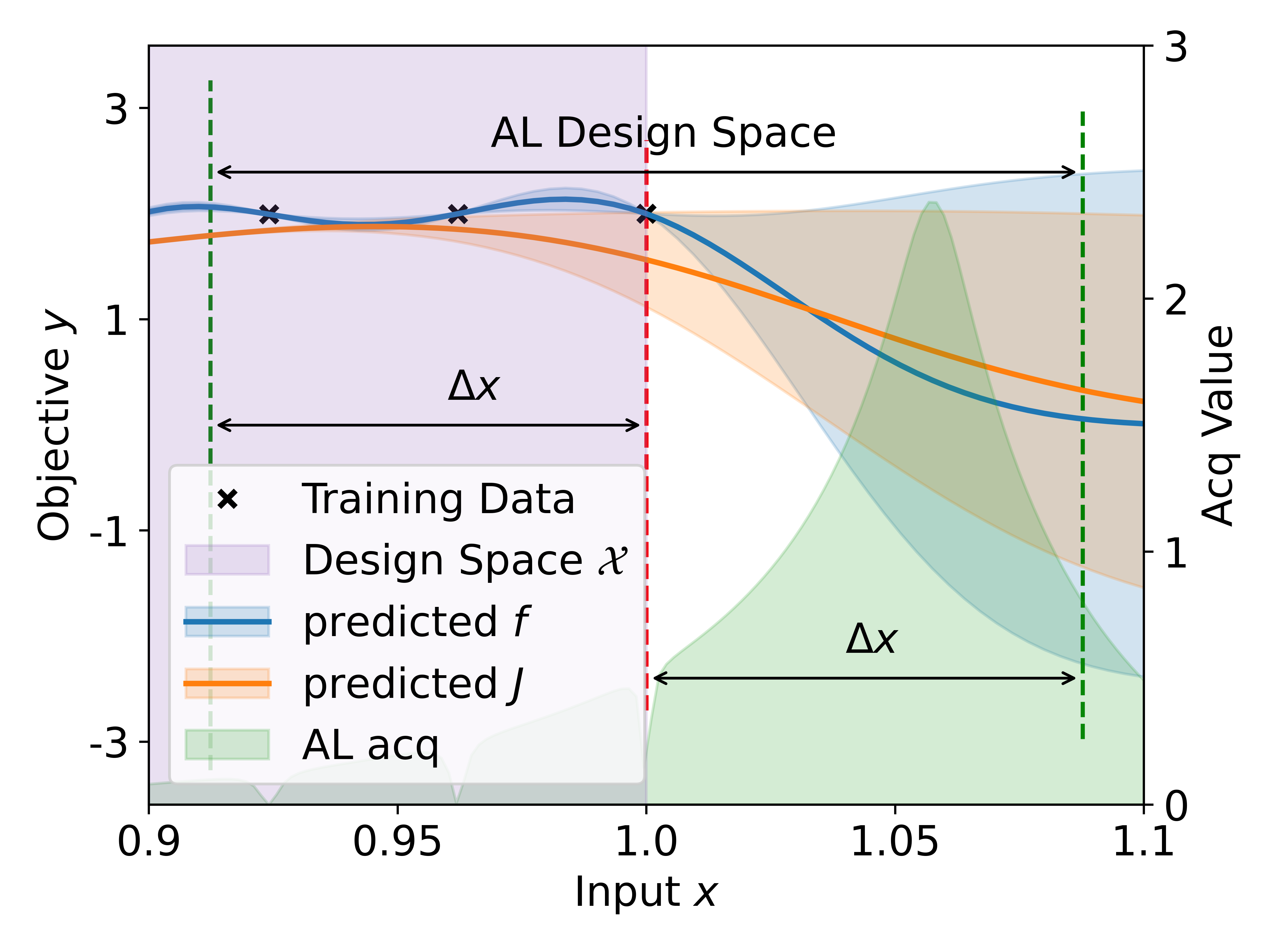}
	\end{minipage}
	}
	\subfigure[After the AL process]
	{ 
	\begin{minipage}{5.5cm}
		\centering 
		\includegraphics[width=1\textwidth]{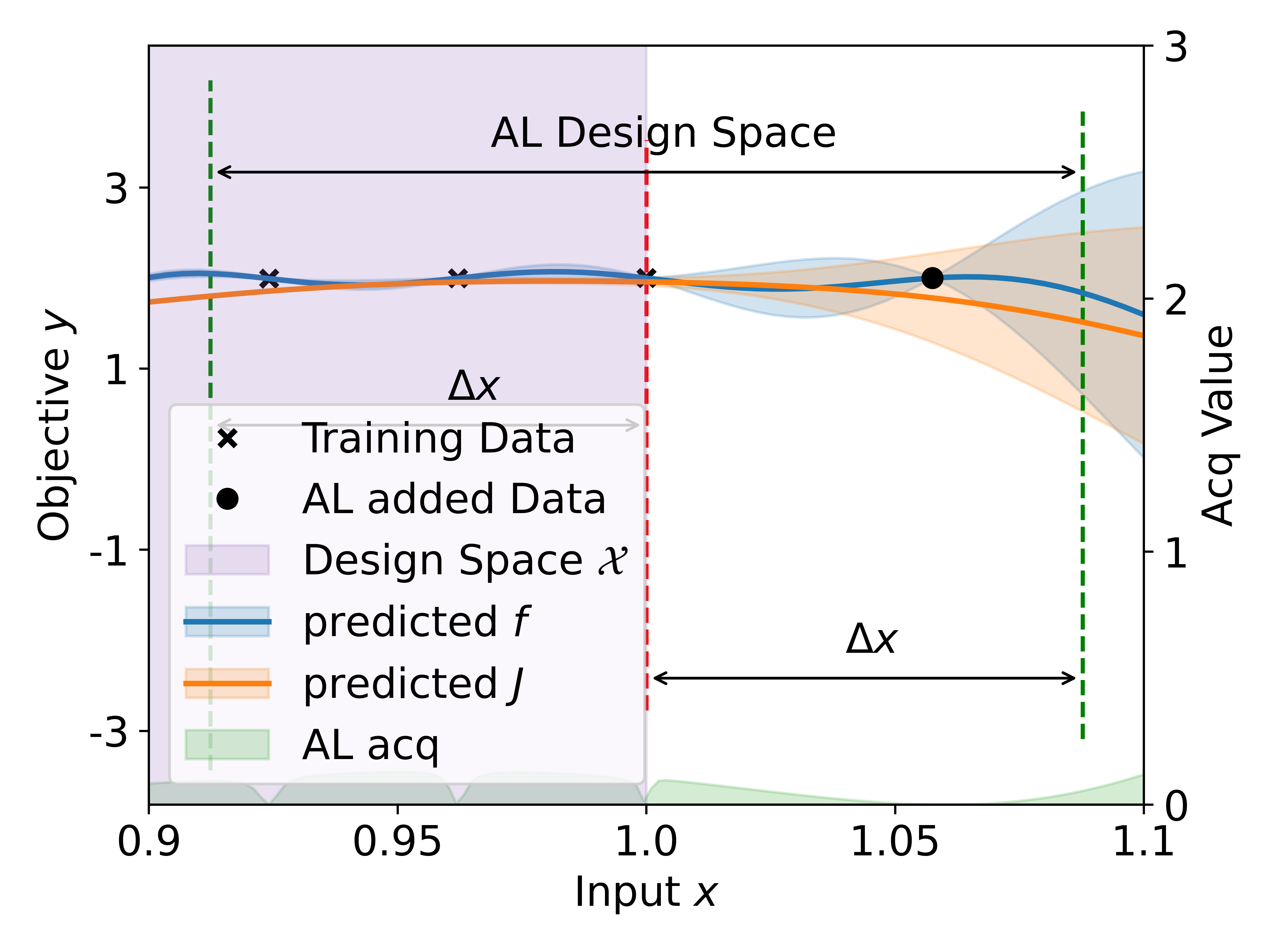} 
	\end{minipage}
	}
	\caption{Illustration of the \textit{boundary issue} and the \textit{duplication issue} in robust optimization. The predictive uncertainty of Bayes risk at $x=1$ (i.e., $\mathbb{V}(J(x=1))$) cannot be lowered to zero even by sampling exactly at this location. The active learning (green) acquisition function is proposed to resolve the \textit{boundary issue}. The design space boundary is illustrated as the red vertical dashed line. The AL process can hence result in sampling outside the original design space in order to reduce the uncertainty at the design space boundary.} 
\label{Fig: AL_acq}
\end{figure*}

\noindent where $\mathbb{V}_{J_i}$ represents the variance of $i$th $J$. As AL  brings extra computational complexity, it is sensible to only use it when at least one of the following conditions is met: (i). when BO has resulted in sampling at the design space boundary, which can be defined as: $min_{vec}(\boldsymbol{x}^*-\boldsymbol{B}_{\mathcal{X}_l})<\epsilon$\ \textbf{or}\ $min_{vec}(\boldsymbol{B}_{\mathcal{X}_u}-\boldsymbol{x}*)<\epsilon$, where $\boldsymbol{B}_{\mathcal{X}_l}, \boldsymbol{B}_{\mathcal{X}_u}$ represents the lowest and largest point coordinates that can define the design space, $\epsilon$ is a small non-negative threshold, $min_{vec}$ is the coordinate-wise minimum operator, (ii). when BO has resulted in duplicate sampling in the design space: $min||\boldsymbol{X}-\boldsymbol{x}^*||<\epsilon$ . Assuming the acquisition function has resulted in sampling $\boldsymbol{x}^*$, we propose to perform the AL optimization step within the bounded space $\mathcal{B}: [\boldsymbol{x}^* - \Delta\boldsymbol{x}, \boldsymbol{x}^* + \Delta\boldsymbol{x}]$, where $\Delta\boldsymbol{x}$ is a hyperparameter (illustrated in Fig. \ref{Fig: AL_acq}) that needs to be specified upfront. For bounded input uncertainty distributions, this can be intuitively specified as the distribution boundary; for the unbounded input uncertainty distribution like Gaussian distribution, a distance between the mean and 97.5 percentage of the marginal distribution can be chosen as $\Delta \boldsymbol{x}$. 

\subsection{Framework Outline}  

\begin{figure}[h]
  \begin{algorithm}[H]
  \SetAlgoLined
   \caption{Robust Multi-Objective Bayesian Optimization considering Input Uncertainty  (RMOBO-IU)}
     \label{Alg: rMOBO_alg}
   \textbf{Input}: max iter: $N_{iter}$, design space : $\mathcal{X}$ , training data $D =\{\boldsymbol{X}, \boldsymbol{Y}\}$, query pool: $Q = \{\}$, design space boundary stack: $\boldsymbol{B}_\mathcal{X}=\{\boldsymbol{B}_{\mathcal{X}_l}, \boldsymbol{B}_{\mathcal{X}_u}\}$, minimum distance threshold: $\epsilon$, $\Delta x$  ;\\
   \For {$i := 1$ to $N_{iter}$}
   {
      construct model based on $D$: 
      $ \mathcal{M}_{\boldsymbol{J}}: \{J_1\sim\mathcal{GP}'_1, ..., J_M\sim\mathcal{GP}'_M\}$ \\
      $\boldsymbol{X}_q^*$ = $arg \underset{\boldsymbol{x}\in \mathcal{X}}{max}\  \alpha(\boldsymbol{X}_q, \psi, \mathcal{M}_{\boldsymbol{J}})$ \\
      Augment query pool: $Q = \{Q \cup \boldsymbol{X}_q^*\}$\\
      initialize AL and BO pool: $X_k^{**} = \{\}$, $X_{q\setminus k}^* = \{\}$\\
      \For {$j := 1$ to $q$}
      {
            \eIf {$min||\boldsymbol{X}-\boldsymbol{x}_j^*||<\epsilon$ \   \textbf{or} \  $min_{vec}(\boldsymbol{x}_j^*-\boldsymbol{B}_{\mathcal{X}_l})<\epsilon$\ \textbf{or}\ $min_{vec}(\boldsymbol{B}_{\mathcal{X}_u}-\boldsymbol{x}_j^*)<\epsilon$}{
   $\boldsymbol{x}_j^{**} = arg \underset{\boldsymbol{x}\in [\boldsymbol{x}_j^{*} - \boldsymbol{\Delta_{\boldsymbol{x}}}, \boldsymbol{x}_j^{*} + \boldsymbol{\Delta_{\boldsymbol{x}}}]}{max}\  \alpha_{\text{AL}}(\boldsymbol{x}, \boldsymbol{x}_j^{*}, \mathcal{M}_{\boldsymbol{J}}, \mathcal{M}_{\boldsymbol{f}})$\\$\boldsymbol{X}_k^{**} = \boldsymbol{X}_k^{**}\cup \boldsymbol{x}_j^{**}$}{$\boldsymbol{X}_{q\setminus k}^*=\boldsymbol{X}_{q\setminus k}^*\cup \boldsymbol{x}_j^*$}
   Concatenate: $\boldsymbol{X}_{q}^{**} =  \boldsymbol{X}_k^{**} \cup \boldsymbol{X}_{q\setminus k}^{*}$
      }
   Query observations and augment training data: $D = \{D \cup \{ X_{q}^{**}, \boldsymbol{f}(X_{q}^{**})\} \}$
   }
   Concatenate optimal candidates : $\boldsymbol{X}_{cand} = \{\boldsymbol{x} \in \mathcal{X}: \boldsymbol{x} \in \boldsymbol{X} \cup Q\}$\\ 
  \textbf{Output} ranking on model inferred optimal candidates: $A_{rank}(\mathcal{M}_{\boldsymbol{J}}(\boldsymbol{X}_{cand}))$, robust model: $\mathcal{M}_{\boldsymbol{J}}$
  \end{algorithm}
\end{figure}

The complete RMOBO-IU approach is presented in Algorithm. \ref{Alg: rMOBO_alg}. The main paradigm is similar to a standard BO flow. Starting with a limited amount of data, the R-GP is constructed, and the Bayes risks are inferred. The first stage acquisition optimization (line 4) is conducted to search for the robust Pareto optimal points. Next, in the second phase, AL process (line 7-13) is utilized as needed to pursue better sampling candidates. Once the optimization has stopped, the Pareto front $\mathcal{F}_{\boldsymbol{J}}^*$ can be extracted based on the final models (out-of-sample) or on the sampled points (in-sample). We also note that this framework can be used for single objective robust BO if the objective number $M=1$, and the ranking operation in Eq. \ref{Eq: myopic_acq_2} is defined as $A_{rank}: = max(\cdot)$. 

\section{Numerical Investigation} \label{Sec:numerical_benchmark}

\begin{table}[hbpt]
\caption{Bi-objective benchmark function settings (see also appendix \ref{App: synthetic func}), where $t(\cdot), \mathcal{N}(\cdot), Tr\mathcal{N}(\cdot), \text{U}(\cdot)$ represents the student-t, normal, truncated normal and uniform distribution respectviely.}
\begin{center}
\begin{tabular}{llllll}
\hline 
\multicolumn{1}{l}{\bf Function}  & $\xi $ distribution & Input & Problem  & AL design \\
  &  & Dimension & Type (Fig, \ref{fig: different_pf}) &space  $\Delta \boldsymbol{x}$\\

\hline
VLMOP2 & $t(200, 0, 0.01^2)$& 2&C.1 & [0.0166, 0.0166]\\
SinLinForrester              & $\mathcal{N}(0, 0.05^2)$ & 1& C.2& [0.098, 0.098]\\

MDTP2 & $Tr\mathcal{N}([0, 0], [0.02^2, 0.04^2]$ &  2&\multirow{3}{*}{C.3} & [0.05, 0.05]\\
& \quad$[-0.05, -0.05], $ \\
& \quad [0.05, 0.05])&  \\
MDTP3    &U([-2e-2, -0.1], & 2&\multirow{2}{*}{C.4}& [0.02, 0.1]\\
& \quad\  [2e-2, 0.1]) \\
BraninGMM        & U([-0.2, -0.2], & 2& C.4 & [0.02, 0.02]\\
        &\quad\  [0.2, 0.2] \\
\hline\\
\end{tabular}
\end{center}
\label{Tab:benchmark}
\end{table}

There are relatively few benchmark functions in literature for robust multi-objective optimization. Therefore, we construct some new synthetic functions for benchmarking RMOBO-IU. They are listed in Table \ref{Tab:benchmark} and detailed in appendix \ref{App: synthetic func}, and used with various input uncertainty distributions. We note that these new synthetic functions cover all 4 cases that we have discussed in Section. \ref{seq: difference_pareto}. We employ the squared exponential kernel with a Maximum A Posterior (MAP) strategy, driven by the L-BFGS-B optimizer. We follow the same strategy of \cite{frohlich2020noisy} by specifying a log-normal prior on the lengthscales, and 2000 MC samples are used for approximating the kernel expectation. 

The code is implemented using the Trieste  library \citep{Berkeley_Trieste_2021}, and we test the RMOBO-IU framework using two popular acquisition functions for MOBO, i.e., EHVI and qEHVI. We start each benchmark with $5d$ initial data points uniformly generated in the design space, where $d$ is the problem dimensionality.  The experiments are conducted on a server with Intel(R) Xeon(R) CPUs E5-2640 v4 @ 2.40GHz, and each synthetic problem is repeated 30 times for robustness.

\begin{figure*}[!htbp]
\centering 
    \subfigure[VLMOP2 ]{ 
	\begin{minipage}{0.45\textwidth}
		\centering 
		\includegraphics[width=1.2\textwidth]{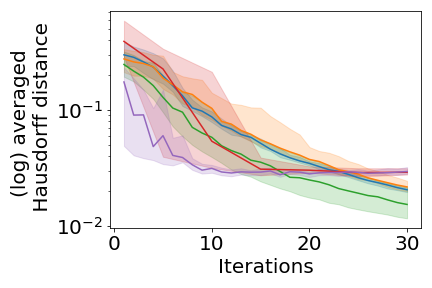} 
	\end{minipage}}\hfill
	\subfigure[SinLinForrester Function]{ 
	\begin{minipage}{0.45\textwidth}
		\centering 
		\includegraphics[width=1.2\textwidth]{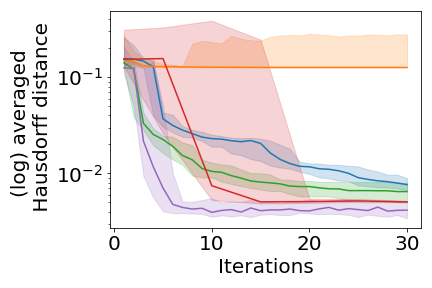} 
	\end{minipage}}\hfill
		\subfigure[MDTP2 Function]{ 
	\begin{minipage}{0.45\textwidth}
		\centering 
		\includegraphics[width=1.2\textwidth]{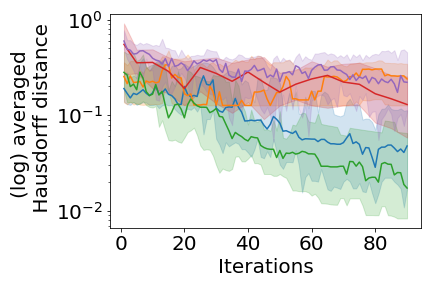} 
	\end{minipage}}\hfill
	\subfigure[MDTP3 Function]{ 
	\begin{minipage}{0.45\textwidth}
		\centering 
		\includegraphics[width=1.2\textwidth]{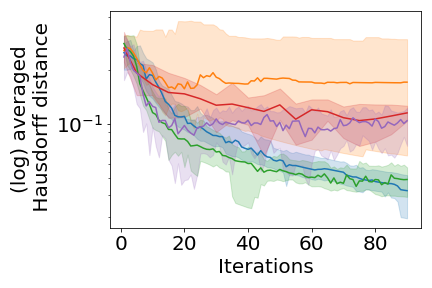} 
	\end{minipage}}\hfill
	\subfigure[BraninGMM]{ 
	\begin{minipage}{0.45\textwidth}
		\centering 
		\includegraphics[width=1.2\textwidth]{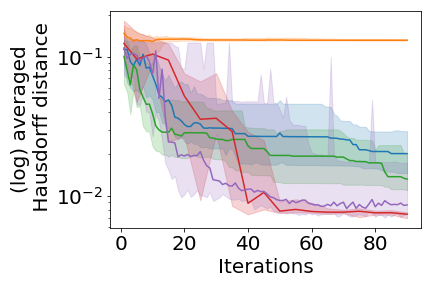} 
	\end{minipage}}\hfill
	\subfigure{ 
	\begin{minipage}{0.45\textwidth}
	    \vspace*{-2cm}
	    \hspace*{1cm}
		\centering 
		\includegraphics[width=0.6\textwidth]{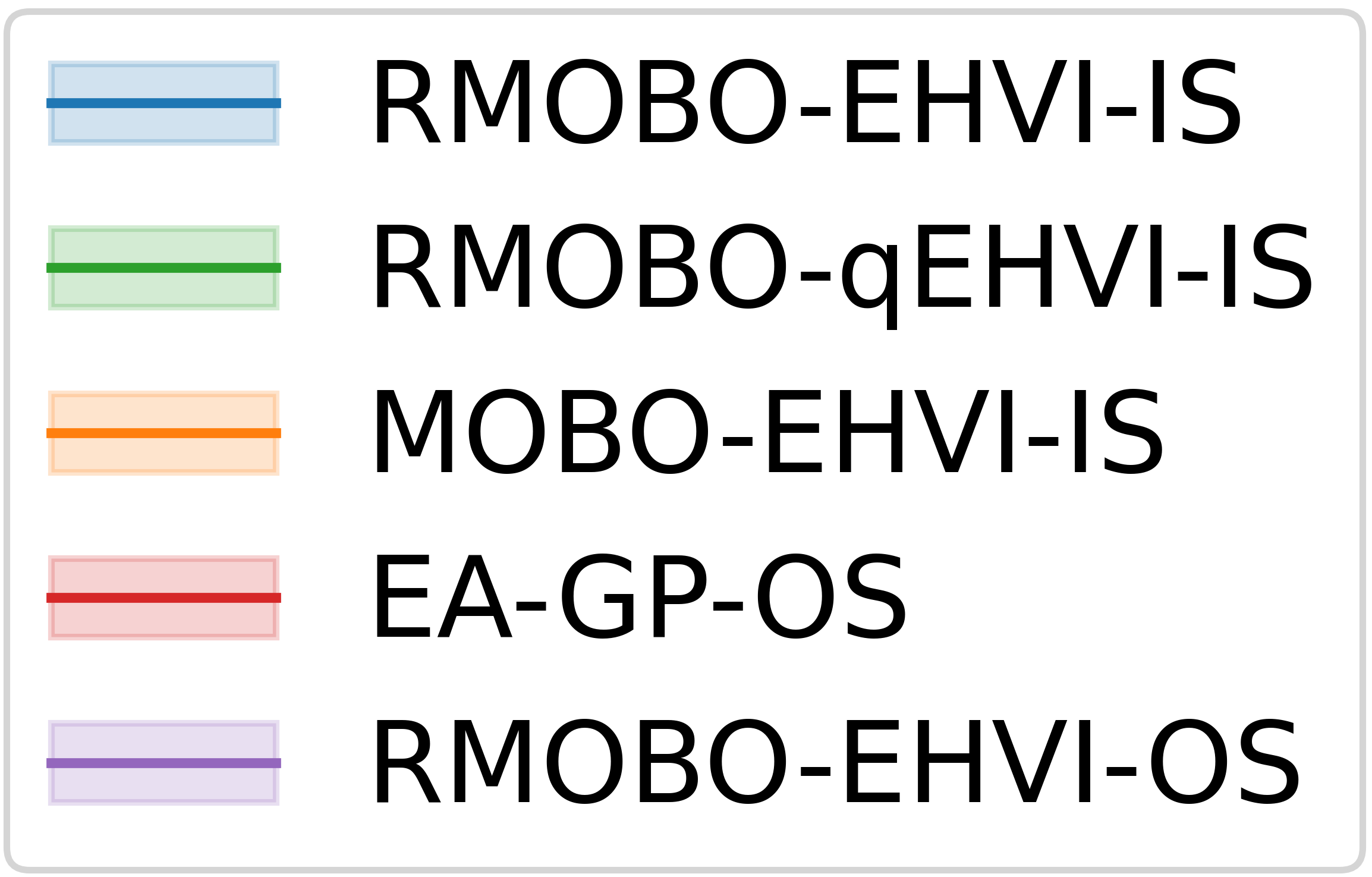} 
	\end{minipage}}\hfill
	\caption{Synthetic benchmark results for the AVD score with respect to the number of iterations. The median across 30 experiments is represented as a line and the 25/75th percentiles are reported as the shaded area.
} 
\label{fig:benchmark_res}	
\end{figure*}

\begin{figure*}[!htbp]
\centering 
    \subfigure[VLMOP2 \label{fig:IO}]{ 
	\begin{minipage}{0.45\textwidth}
		\centering 
		\includegraphics[width=1.2\textwidth]{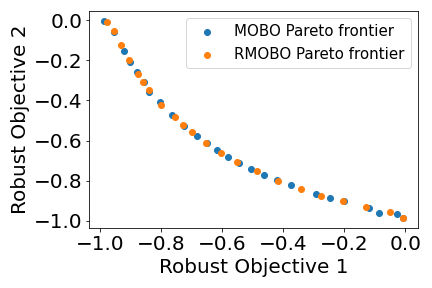} 
	\end{minipage}}\hfill
	\subfigure[SinLinForrester Function]{ 
	\begin{minipage}{0.45\textwidth}
		\centering 
		\includegraphics[width=1.2\textwidth]{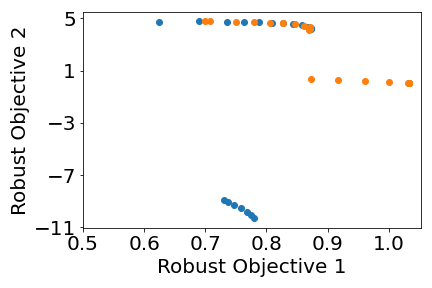} 
	\end{minipage}}\hfill
		\subfigure[MDTP2 Function]{ 
	\begin{minipage}{0.45\textwidth}
		\centering 
		\includegraphics[width=1.2\textwidth]{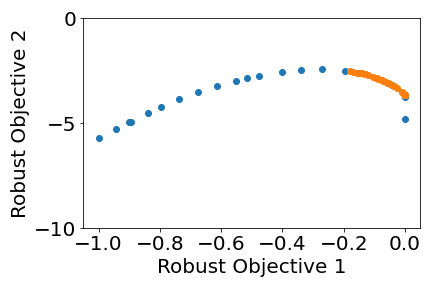} 
	\end{minipage}}\hfill
	\subfigure[MDTP3 Function]{ 
	\begin{minipage}{0.45\textwidth}
		\centering 
		\includegraphics[width=1.2\textwidth]{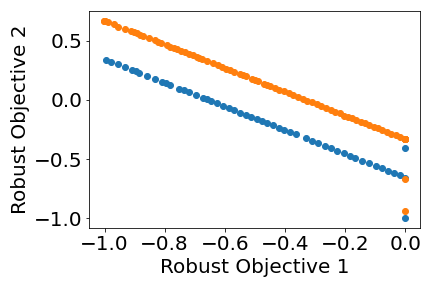} 
	\end{minipage}}\hfill
	\subfigure[BraninGMM]{ 
	\begin{minipage}{0.45\textwidth}
		\centering 
		\includegraphics[width=1.2\textwidth]{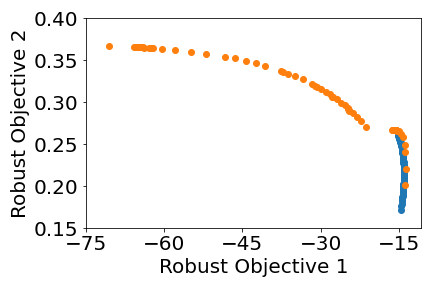} 
	\end{minipage}}\hfill
	\subfigure{ 
	\begin{minipage}{0.45\textwidth}
	    \vspace*{-2cm}
	    \hspace*{1cm}
		\centering 
		\includegraphics[width=0.8\textwidth]{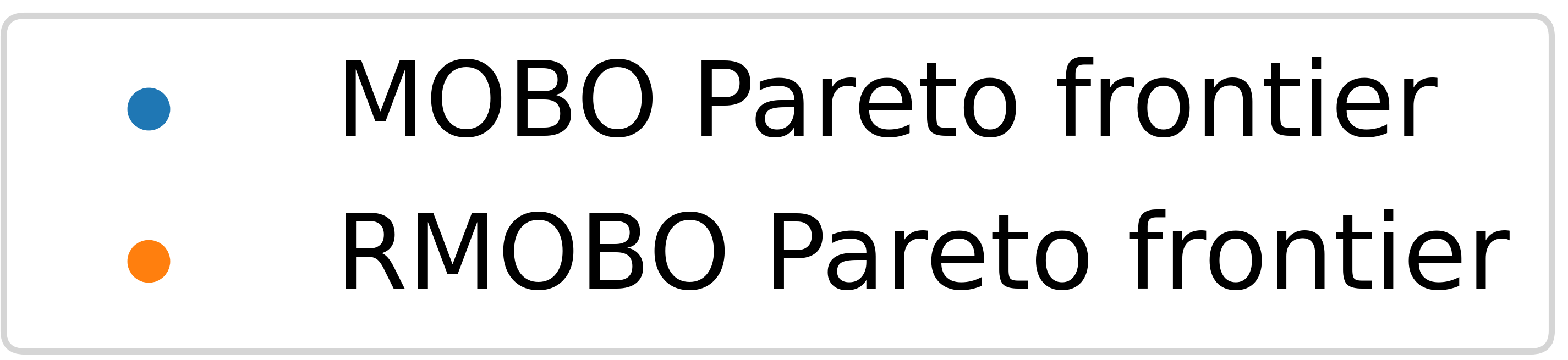} 
	\end{minipage}}\hfill
	\caption{Pareto front comparison of RMOBO and MOBO which doesn't inherently consider input uncertainty. With the consideration of input uncertainty, the Pareto front of MOBO (orange) is less optimal than RMOBO (blue) based on in-sample recommendations.
} 
\label{fig:benchmark_pareto_front}	
\end{figure*}

The performance is evaluated using the Averaged Hausdorff Distances (AVD)  based indicator (Eq. 45 of \citet{schutze2012using}) in the scaled objective space\footnote{When calculating the AVD metric, we scale the objective space to $[0, 1]^M$ based on the real Pareto front. This scaling aims to reduce bias from AVD if the magnitude between the objectives differ significantly.} as the performance metric with $p=2$. The reference Pareto frontier $\boldsymbol{F}^*$ is generated using an exhaustive NSGAII \citep{deb2002fast} search with population size 60. We compare RMOBO-IU (using an in-sample (IS) strategy) with standard MOBO, as well as a non-Bayesian MOO strategy. In the latter we use the NSGAII evolutionary algorithm (EA) based on a one-shot learned standard GPs as an Out-of-Sample (OS) strategy, which we refer to as the EA-GP-OS method. \footnote{The Bayes risk of the EA-GP-OS method is calculated using 2000 Monte Carlo samples on the GP posterior mean. For NSGAII we use a population size of 20 and 200 generations.}

The AVD's convergence histories of different acquisition functions\footnote{The qEHVI acquisition function has batch size $q = 2$.} and strategies are depicted in Fig. \ref{fig:benchmark_res}, and the final recommended Pareto fronts are shown in Fig. \ref{fig:benchmark_pareto_front}, more experiment detailes are sent to appendix \ref{App: exp_detail}. According to the results, it can be observed that for the VLMOP2 problem (case 1), its robust Pareto frontier is similar to its original Pareto frontier and that has led to similar convergence properties of the AVD measure. For the other cases, RMOBO-IU converges to the robust Pareto frontier while the non-robust MOO identifies of course non-robust solutions. We also note that in general a faster convergence speed can be observed by utilizing batch acquisition functions.

We also provide out-of-sample recommendations based on the R-GP for our RMOBO method to compare with EA-GP-OS, which we denote as RMOBO-EHVI-OS \footnote{We use the same NSGAII settings as used in EA-GP-OS.}. We note that RMOBO-EHVI-OS has in general an improved performance over EA-GP-OS, while the EA-GP-OS method is more robust for BraninGMM. Overall, while the out-of-sample strategies demonstrate better results than in-sample strategies on some benchmarks, their performance are not consistent across all problems. The worse performance can be shown especially on MDTP2 and MDTP3, where we deduce that if the problem is more difficult for an accurate surrogate model, the out-of-sample recommendation can have outliers of the Pareto frontier leading to a worse AVD score. Hence, we recommend to keep using in-sample strategies as a more robust choice. 

\section{Conclusion}
We presented RMOBO-IU: an approach for robust multi-objective optimization within the Bayesian optimization framework which considers input uncertainty. 

We optimize for Bayes risk, which is efficiently inferred using a robust Gaussian Process. The robust Gaussian Process is integrated in a two-stage Bayesian optimization process to search for the robust Pareto front. The effectiveness of the RMOBO-IU framework has been demonstrated on various new benchmark functions with promising results.

Future research will focus on several aspects: the SAA-MC-based kernel expectation still relies on sampling in the input space, which restricts its usage for a higher number of input dimensions. A more scalable approach is needed. Moreover, Bayesian versions of other robustness measures will also be investigated.

\begin{acknowledgements}
This research received funding from the Flemish Government (AI Research Program) and Chinese Scholarship Council under grant number 201906290032.
\end{acknowledgements}  

\noindent \small\textbf{Data availability Statement} The code for reproducing the experiments for the current study are available from the corresponding author on reasonable request.

\bibliography{reference.bib}   
\bibliographystyle{spbasic}      

\appendix

\section{Synthetic Functions} \label{App: synthetic func}
We provide a detailed description of the synthetic functions that we have utilized for numerical benchmarking, with a math formulation in Table. \ref{Tab:benchmark_detailed}. We note that the inverse of these synthetic functions is used to perform MOO for maximization.  

\noindent\textbf{VLMOP2} \citep{fonseca1995multiobjective} A bi-objective synthetic problem, where each objective function has only one global optima within the design space.

\noindent\textbf{MDTP2} A modified version of \cite{deb2005searching}'s test problem 2.

\noindent\textbf{SinLinForrester}  A bi-objective problem with SineLiner \citep{frohlich2020noisy} function and Forrester function \cite{forrester2008engineering}.  

\noindent\textbf{MDTP3} A modified version of \cite{deb2005searching}'s test problem 3. 

\noindent\textbf{BraninGMM} A bi-objective problem with Branin function \citep{picheny2013benchmark} and Gaussian Mixture Model \citep{frohlich2020noisy}, the input uncertainty is taken from \citep{beland2017bayesian}. 

\section{Experiment Details} \label{App: exp_detail}
In this section we demonstrate the experimental details, more specifically, we demonstrate the final query points of RMOBO-IU on the synthetic problem. The samples that RMOBO-IU investigated is illustrated in Fig. \ref{Fig: RMOBO_Xs}, once the AL process has been activated, the pending data is not the same as query data and the difference has been noted with the arrows. The one dimensional SinLinForrester function is omitted for its simplicity. It can be observed that RMOBO-IU is searching for locating at the robust Pareto frontier. Meanwhile, the AL optimization helps to alleviate the duplication and boundary issue in all the synthetic problems.

\begin{table*}[h]
\caption{Bi-objective benchmark functions settings}
\begin{center}
\begin{tabular}{lll}
\hline 
\multicolumn{1}{l}{\bf Function}  &\multicolumn{1}{c}{\bf Design Space } & Function Expression \\
\hline 
SinLinForrester & [0, 1]& $y_1 = sin(5\pi x^2) + 0.5x$\\
& & $y_2 = (6x-2)^2 sin(12x-4)$\\
\hline 
VLMOP2 & $[-2, 2]^2$ &$y_1 = 1 - \text{exp}(-\Sigma_{i=1}^2(x_i - \frac{1}{\sqrt{2}})^2)$ \\ && $y_2 = 1 - \text{exp}(-\Sigma_{i=1}^2(x_i + \frac{1}{\sqrt{2}})^2)$ \\
\hline 
MDTP2           &$[0, 1] \times [-1, 1]$ & $y_1 = x_1$\\
& & $y_2 = (1 - x_1^2)+(10 + x_2^2 - 10cos(4\pi x_2))\cdot$\\
& & $\quad\quad(\frac{1}{0.2 + x_1} + 10x_1^2)$\\
\hline 
MDTP3           &$[0, 1]^2$ & $y_1 = x_1$\\
& & $y_2 = 1 - 0.9\ e^{{(-\frac{x_2 - 0.8}{0.1}})^2} $\\
& & $\quad\quad- 1.3\ e^{(-\frac{x_2 - 0.3} { 0.03})^2}$ \\

\hline
BraninGMM  & $[0, 1]^2$& $y_1 = \frac{1}{51.95}[(x_2 - \frac{5.1x_1^2}{4\pi^2} + \frac{5x_1}{\pi}-6)^2 $\\
& & $\quad \quad  + (10 - \frac{10}{8\pi}cos(x_1)) - 44.81]$\\
& &  $y_2 = \sum_{j=1}^3p(z=j)p(x \vert z=j)$\\
& & where : \\
& & $p(z=1)=0.04\pi, x\vert z=1 \sim \mathcal{N}([0.2, 0.2], 0.2^2\delta)$\\
& & $p(z=2)=0.014\pi, x\vert z=2 \sim \mathcal{N}([0.8, 0.2], 0.1^2\delta)$\\
& & $p(z=3)=0.014\pi, x\vert z=3 \sim \mathcal{N}([0.5, 0.7], 0.1^2\delta)$\\
&& where $\delta$ represents Kronecker delta. \\
\hline\\
\end{tabular}
\end{center}
\label{Tab:benchmark_detailed}
\end{table*}

We illustrated part of the objective functions as well as their robust contour parts in Fig. \ref{Fig: objective comparison}, where the reference Pareto optimal points input are also illustrated in the figure.

\begin{figure*}[!htbp]
\centering 
\subfigure[MDTP2 Objective 2 original objective]{ 
	\begin{minipage}{0.45\textwidth}
		\centering 
		\includegraphics[width=1.2\textwidth]{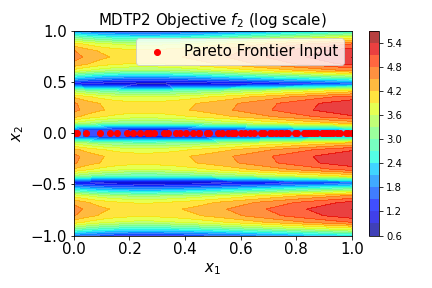} 
	\end{minipage}}\hfill
	\subfigure[MDTP2  Objective 2 robust objective]{ 
	\begin{minipage}{0.45\textwidth}
		\centering 
		\includegraphics[width=1.2\textwidth]{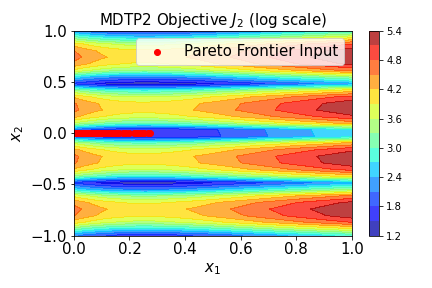} 
	\end{minipage}}\hfill
    \subfigure[MDTP3 Objective 2 original objective]{ 
	\begin{minipage}{0.45\textwidth}
		\centering 
		\includegraphics[width=1.2\textwidth]{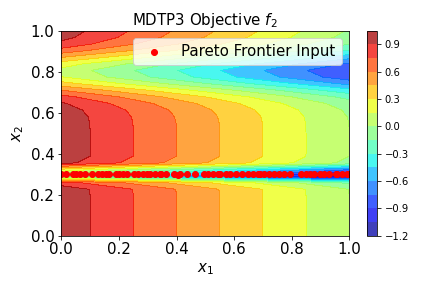} 
	\end{minipage}}\hfill
	\subfigure[MDTP3  Objective 2 robust objective]{ 
	\begin{minipage}{0.45\textwidth}
		\centering 
		\includegraphics[width=1.2\textwidth]{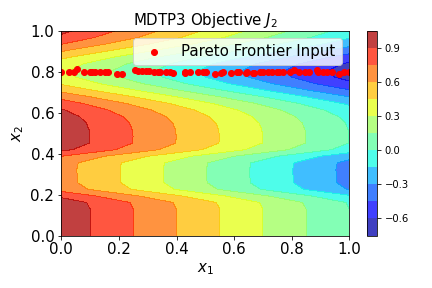} 
	\end{minipage}}\hfill
	    \subfigure[BraninGMM Objective 1 original objective]{ 
	\begin{minipage}{0.45\textwidth}
		\centering 
		\includegraphics[width=1.2\textwidth]{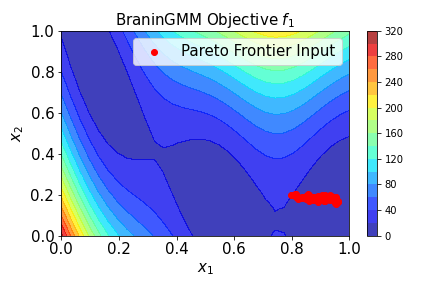} 
	\end{minipage}}\hfill
	\subfigure[BraninGMM  Objective 2 original objective]{ 
	\begin{minipage}{0.45\textwidth}
		\centering 
		\includegraphics[width=1.2\textwidth]{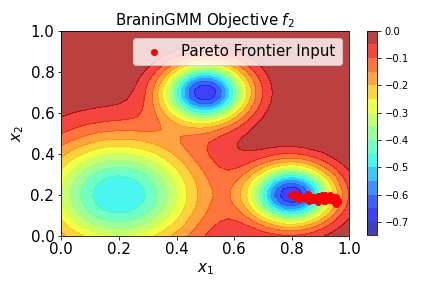} 
	\end{minipage}}\hfill
		    \subfigure[BraninGMM Objective 2 robust objective]{ 
	\begin{minipage}{0.45\textwidth}
		\centering 
		\includegraphics[width=1.2\textwidth]{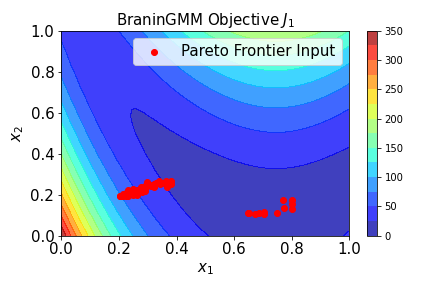} 
	\end{minipage}}\hfill
	\subfigure[BraninGMM  Objective 2 robust objective]{ 
	\begin{minipage}{0.45\textwidth}
		\centering 
		\includegraphics[width=1.2\textwidth]{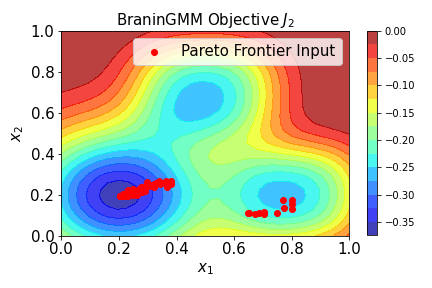} 
	\end{minipage}}\hfill
	\caption{Comparison of non-robust and robust objective functions, the inverse of which are used for maximization in numerical experiments. The corresponding Pareto frontier input is also illustrated in the figure, which has been obtained from NSGAII.
} 
\label{Fig: objective comparison}
\end{figure*}

\begin{figure*}[!hbpt]
\centering 
\subfigure[VLMOP2 RMOBO-IU Input Samples]{ 
	\begin{minipage}{0.45\textwidth}
		\centering 
		\includegraphics[width=1\textwidth]{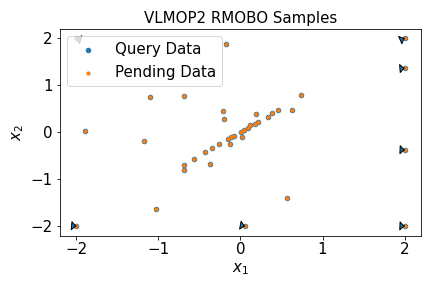} 
	\end{minipage}}\hfill
	\subfigure[MDTP2 RMOBO-IU Input Samples]{ 
	\begin{minipage}{0.45\textwidth}
		\centering 
		\includegraphics[width=1\textwidth]{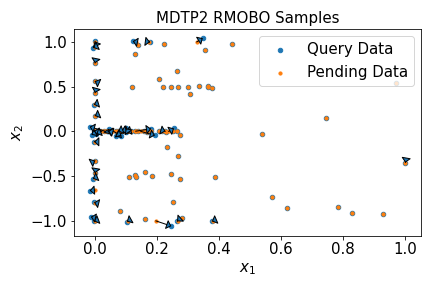} 
	\end{minipage}}\hfill
    \subfigure[MDTP3 RMOBO-IU Input Samples]{ 
	\begin{minipage}{0.45\textwidth}
		\centering 
		\includegraphics[width=1\textwidth]{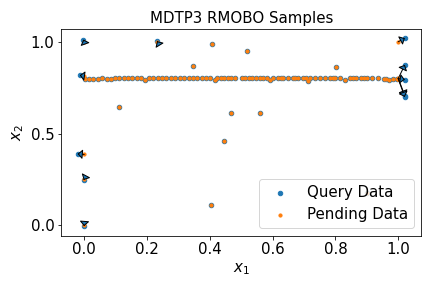} 
	\end{minipage}}\hfill
	\subfigure[BraninGMM RMOBO-IU Input Samples]{ 
	\begin{minipage}{0.45\textwidth}
		\centering 
		\includegraphics[width=1\textwidth]{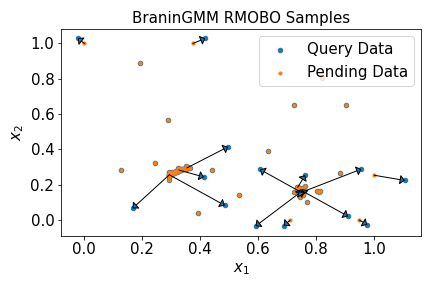} 
	\end{minipage}}\hfill

	\caption{Illustration of RMOBO-IU sample in input space (based on EHVI experiment).
} 
	\label{Fig: RMOBO_Xs}
\end{figure*}

%
%


\end{document}